\newif\ifjournal  
\newif\ifblind    
\newif\ifanalysis
  \newcommand\vldbdoi{XX.XX/XXX.XX}
  \newcommand\vldbpages{XXX-XXX}
  \newcommand\vldbvolume{14}
  \newcommand\vldbissue{1}
  \newcommand\vldbyear{2020}
  \newcommand\vldbauthors{\authors}
  \newcommand\vldbtitle{\shorttitle}
  \newcommand\vldbavailabilityurl{URL_TO_YOUR_ARTIFACTS}
  \newcommand\vldbpagestyle{plain}
  \renewcommand\cite{\citep}
\def\blfootnote{\xdef\@thefnmark{}\@footnotetext}
\definecolor{RedOrange}{cmyk}{0, 0.77,0.87,0}
\def\BibTeX{{\rm B\kern-.05em{\sc i\kern-.025em b}\kern-.08em
    T\kern-.1667em\lower.7ex\hbox{E}\kern-.125emX}}
\begin{document}

\title{PANDORA: A Parallel Dendrogram Construction Algorithm for Single Linkage Clustering on GPU}

\ifjournal
  \unless\ifblind
    \author{Piyush Sao}
    \orcid{0000-0002-9432-5855}
    \affiliation{%
      \institution{Oak Ridge National Laboratory}
      \streetaddress{1 Bethel Valley Rd}
      \city{Oak Ridge}
      \state{Tennessee}
      \country{USA}
      \postcode{37830}
    }
    \email{saopk@ornl.gov}

    \author{Andrey Prokopenko}
    \orcid{0000-0003-3616-5504}
    \affiliation{%
      \institution{Oak Ridge National Laboratory}
      \streetaddress{1 Bethel Valley Rd}
      \city{Oak Ridge}
      \state{Tennessee}
      \country{USA}
      \postcode{37830}
    }
    \email{prokopenkoav@ornl.gov}

    \author{Damien Lebrun-Grandi\'e}
    \orcid{0000-0003-1952-7219}
    \affiliation{%
      \institution{Oak Ridge National Laboratory}
      \streetaddress{1 Bethel Valley Rd}
      \city{Oak Ridge}
      \state{Tennessee}
      \country{USA}
      \postcode{37830}
    }

    \renewcommand{\shortauthors}{Sao and Prokopenko, et al.}
  \fi
\else
  \newcommand{\email}[1]{\href{mailto:#1}{\texttt{#1}}}
  \author{
    Piyush Sao\thanks{Oak Ridge National Laboratory}
    \\ \email{saopk@ornl.gov}
    \and 
    Andrey Prokopenko\footnotemark[1]
    \\ \email{prokopenkoav@ornl.gov}
    \and 
    Damien Lebrun-Grandi\'e\footnotemark[1]
    \\ \email{lebrungrandid@ornl.gov}
  }
  \date{}
\fi

\ifjournal
  \begin{abstract}
    This paper presents \pandora, a novel parallel algorithm for efficiently
constructing dendrograms for single-linkage hierarchical clustering, including
\hdbscan. Traditional dendrogram construction methods from a minimum spanning
tree (MST), such as agglomerative or divisive techniques, often fail to
efficiently parallelize, especially with skewed dendrograms common in real-world
data.

\pandora addresses these challenges through a unique recursive tree contraction
method, which simplifies the tree for initial dendrogram construction and then
progressively reconstructs the complete dendrogram. This process makes \pandora
asymptotically work-optimal, independent of dendrogram skewness. All steps in
\pandora are fully parallel and suitable for massively-threaded accelerators
such as GPUs.

Our implementation is written in Kokkos, providing support for both CPUs and
multi-vendor GPUs (e.g., Nvidia, AMD). The multithreaded version of \pandora is
2.2$\times$ faster than the current best-multithreaded implementation, while the
GPU \pandora implementation achieved 6-20$\times$ on \amdgpu and 10-37$\times$
on \nvidiagpu speed-up over multithreaded \pandora. These advancements lead to
up to a 6-fold speedup for \hdbscan on GPUs over the current best, which only
offload MST construction to GPUs and perform multithreaded dendrogram
construction.

  \end{abstract}
\fi

\maketitle

\ifjournal
  \pagestyle{\vldbpagestyle}
  \begingroup\small\noindent\raggedright\textbf{PVLDB Reference Format:}\\
  \vldbauthors. \vldbtitle. PVLDB, \vldbvolume(\vldbissue): \vldbpages, \vldbyear.\\
  \href{https://doi.org/\vldbdoi}{doi:\vldbdoi}
  \endgroup
  \begingroup
  \renewcommand\thefootnote{}\footnote{\noindent
  This work is licensed under the Creative Commons BY-NC-ND 4.0 International License. Visit \url{https://creativecommons.org/licenses/by-nc-nd/4.0/} to view a copy of this license. For any use beyond those covered by this license, obtain permission by emailing \href{mailto:info@vldb.org}{info@vldb.org}. Copyright is held by the owner/author(s). Publication rights licensed to the VLDB Endowment. \\
  \raggedright Proceedings of the VLDB Endowment, Vol. \vldbvolume, No. \vldbissue\ %
  ISSN 2150-8097. \\
  \href{https://doi.org/\vldbdoi}{doi:\vldbdoi} \\
  }\addtocounter{footnote}{-1}\endgroup

  \ifdefempty{\vldbavailabilityurl}{}{
  \vspace{.3cm}
  \begingroup\small\noindent\raggedright\textbf{PVLDB Artifact Availability:}\\
  The source code, data, and/or other artifacts have been made available at \url{https://github.com/anonymynona/2023-dendrogram}.
  \endgroup
  }
\fi

\unless\ifjournal
\begin{abstract}
  
\end{abstract}

\noindent\textbf{Keywords:} dendrogram construction,~\hdbscan, single-linkage clustering, 
hierarchical clustering, GPUs, Kokkos, minimum spanning tree (MST), tree algorithms, performance-portable implementation.

\fi

\unless\ifblind
  \blfootnote {%
  This manuscript has been authored by UT-Battelle, LLC, under contract
  DE-AC05-00OR22725 with the U.S. Department of Energy. The United States
  Government retains and the publisher, by accepting the article for publication,
  acknowledges that the United States Government retains a nonexclusive, paid-up,
  irrevocable, world-wide license to publish or reproduce the published form of
  this manuscript, or allow others to do so, for United States Government
  purposes.
  }
\fi

\begin{figure}
  \centering
    \includegraphics[width=0.85\columnwidth]{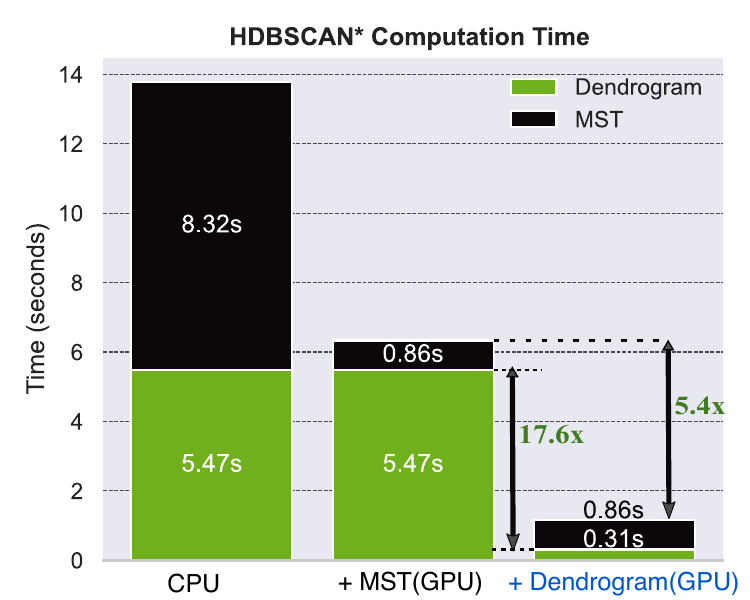}
    \caption{\label{f:overview}Time taken by \hdbscan components
    construction (Euclidean minimum spanning tree (MST) and dendrogram) on
    \amdcpu CPU and \amdgpu GPU for \emph{Hacc37M} dataset.}
\end{figure}


\section{Introduction}\label{sec:intro}



A dendrogram, a tree-like structure that encapsulates the hierarchical nature of
data, is a critical tool in machine learning. Its ability to visually represent
clusters' formation, merging, and splitting has applications in diverse fields,
from linguistics~\cite{divjak2014linguistic}, astronomy~\cite{feigelson2012astronomy} and
psychometry~\cite{yim2015psychology} to document classification~\cite{li1998classification}
and spatial-social network visualization~\cite{luo2011social}. In the
computational and molecular biology world, dendrograms are used for representing
phylogenetic trees~\cite{letunic2007phylotree}, elucidating gene
clustering~\cite{hodge2000myosin} and the evolutionary relationships among
biological taxa~\cite{farris1972estimating}. They are instrumental in decoding
gene co-expression \cite{xu2002gene,ovens2021comparative}, protein-protein interaction
networks, speciation rates, and genetic mutations
\cite{pfeifer2020molecular,zuckerkandl1965evolutionary}.

In most hierarchical clustering algorithms, the decision of how to combine or
split clusters is done through a use of a distance metric (e.g., Euclidean)
\cite{ward1963hierarchical,jarman2020hierarchical,gagolewski2016genie}.
Algorithms differ in their definition of the dissimilarity between two clusters.
In this work, we will focus on the \emph{single-linkage clustering}, which
defines the distance between two clusters to be the minimum distance
among all pairs of points such that the points in a pair do not belong to the
same cluster. Our choice is motivated by its use in the popular Hierarchical
Density-Based Spatial Clustering of Applications with Noise (\hdbscan)
algorithm~\cite{campello2015}.

%

In general, dendrogram construction is considered to be an inexpensive
operation. It is often done as step in a larger procedure. For example, in
\hdbscan, constructing MST (as part of \hdbscan) for high-dimensional data
relies on high-dimensional nearest neighbor search, an expensive procedure
dwarfing the dendrogram construction cost. Another reason is that the
dendrogram construction is often done for the datasets of moderate size.

None of these assumptions hold for the problems we explore in this work. We are
focus on the large datasets of the low dimensional data. In this case,
dendrogram construction becomes a dominant cost. \Cref{f:overview} (middle)
shows the status quo for an astronomy dataset \emph{Hacc37M}, where the MST
construction is performed on a GPU, and the dendrogram construction is done on
CPU. We see that the dendrogram construction takes 86\% of the overall time,
hampering the overall performance of the \hdbscan algorithm. The goal of this
paper is thus to bridge this gap by presenting a new parallel dendrogram
construction algorithm suitable for GPU architectures.

\begin{figure}[t]
    \includegraphics[width=\columnwidth]{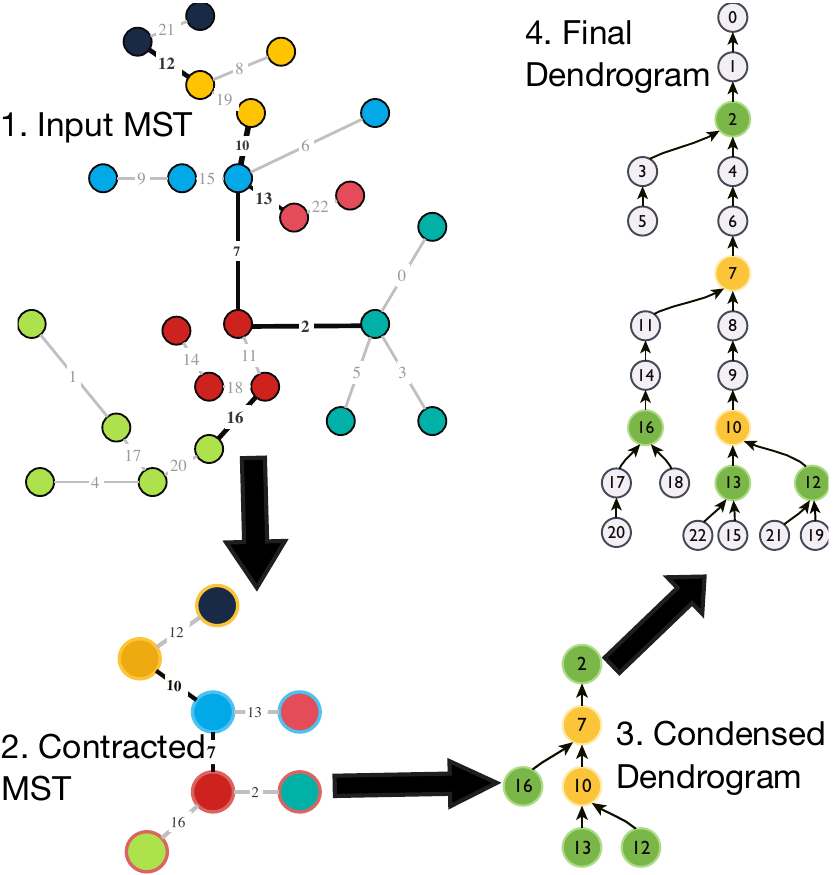}
    \caption{\label{fig:approach}
    A high-level visualization of the \pandora algorithm. The original MST (top
    left) is contracted (bottom left). Using the dendrogram corresponding to
    the contraction (bottom right), the original dendrogram is recovered by
    edge reinsertion from the same level. The dendrograms are shown using the
    edges numbering of the original MST; dendrogram leaf nodes, corresponding
    to the data points, are omitted.
    }
\end{figure}

\begin{figure}[t]
  \centering
    \includegraphics[width=0.64\columnwidth]{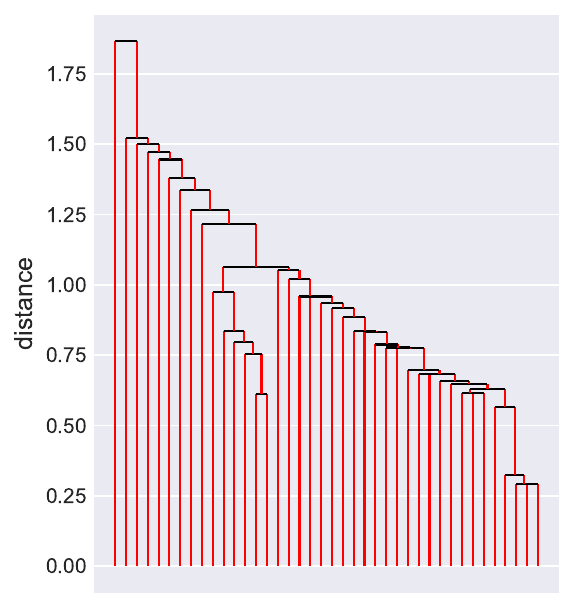}
    \caption{\label{f:dendrogram_example}An example of a highly skewed dendrogram
    constructed from a 40 point sample taken from a 3D Gaussian distribution
    using \hdbscan mutual reachability distance with $minPts = 2$.}
\end{figure}


We introduce a novel parallel algorithm for constructing single-linkage
dendrograms. Our approach is very different from the existing top-down and
bottom-up approaches (see~\Cref{sec:background}). It effectively handles highly
skewed dendrograms such as shown on~\Cref{f:dendrogram_example}, which are
common as we will demonstrate. The algorithm
contracts a subset of edges in an MST to yield its coarser version. We
efficiently compute the dendrogram for this coarse MST and reconstruct the
dendrogram for the original MST by reintegrating all the previously contracted
edges. The contraction is applied recursively to ensure the work optimality of
our algorithm.
This process is illustrated in \Cref{fig:approach} using an MST
with two levels of contraction. \Cref{f:overview} (right) shows that we are
able to achieve our goal, improving the construction time by $17\times$, now
taking 26\% of the overall time for this dataset.


Our proposed parallel dendrogram construction algorithm is work-optimal, highly
parallel, and efficient, even when dealing with highly skewed dendrograms. All
the steps are highly parallelizable and can be easily adapted for multicore
CPUs and GPU architectures using parallel constructs such as parallel loops,
reductions and prefix sums. We implemented our algorithm using the
performance-portable Kokkos library~\cite{kokkos2022}, which marks the first
known GPU implementation for dendrogram construction.

We evaluate our algorithm on various real-world and artificial datasets, and
compare it to the best-known open-source multi-core CPU implementation. Our
experimental results reveal that our multi-core CPU implementation is twice as
fast as the state-of-the-art, while the GPU variant achieves 15-40$\times$
speedup compared to the multi-core CPU performance.

This paper presents several algorithmic, theoretical, and practical
contributions to dendrogram construction in a single linkage clustering
algorithm. These contributions are as follows:
\begin{itemize}
\item We propose a novel tree contraction-based approach for constructing
dendrograms in a highly parallel manner (\Cref{sec:alpha-tree}).
\item We analyze the edge contraction technique that can be used with our
algorithm and establish the necessary and sufficient conditions for its use
(\Cref{sec:correctness}). 
\item We derive the asymptotic lower-bound complexity for any dendrogram
construction algorithm (\Cref{thm:lowerbound}) and prove that our algorithm is
work-optimal(\Cref{sec:complexity}).
\item We provide an efficient and performance-portable algorithm implementation
for our algorithm ported to three different architectures (\Cref{sec:implementation}). In
particular, the paper presents the first GPU implementation for dendrogram
construction for~\hdbscan, which is faster by a factor of 15-37$\times$ over
multithreaded implementation (\Cref{sec:results}).
\end{itemize}

Our work significantly advances the state-of-the-art in parallel \hdbscan
clustering computation on GPUs for large datasets. Combined with recent
developments in parallel Euclidean MST computation on
GPUs~\cite{prokopenko2022}, our algorithm facilitates rapid \hdbscan clustering
computation on modern hardware for low dimensional datasets. For example, a
single Nvidia A100 GPU can now compute \hdbscan clustering in under one second
for a 37M cosmological problem, and around six seconds for a 300M uniformly
distributed point cloud.

\section{Background}\label{sec:background}
\begin{figure*}[thbp]
  \includegraphics[width=\textwidth]{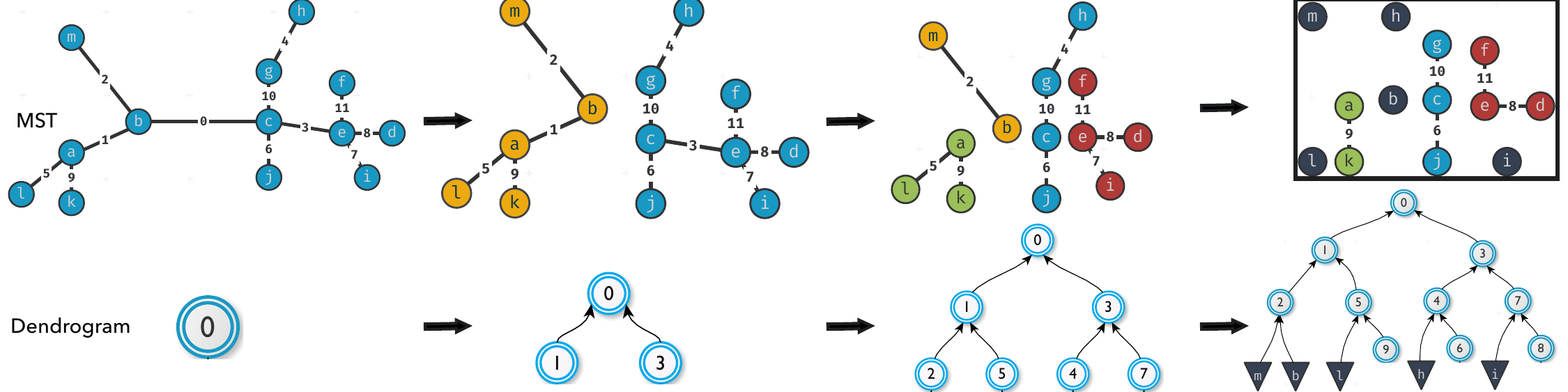}
  \caption{\label{fig:topdown} Steps in top-down dendrogram
  construction(\Cref{sec:divisive}). The dendrogram starts with the heaviest
  edge in \mst as the root, which is removed to divide the tree into two
  connected components. In subsequent steps, the heaviest edge among all components
  is identified, and their parent is the heaviest edge from the previous
  step. This process is repeated recursively for each component of the tree.}
\end{figure*}

\subsection{Single linkage clustering}
\label{sec:singlelinkage}
Single linkage clustering~\cite{gower1969singlelink}, also known as nearest
neighbor clustering, is a hierarchical clustering method that measures the
distance between two clusters by the shortest distance between any pair of
points in the clusters. This approach is particularly adept at identifying
clusters with irregular shapes and non-elliptical forms, setting it apart from
complete linkage and average linkage clustering, which use the longest and
average pairwise distances, respectively.

Our primary interest lies in \hdbscan, a widely-used density-based clustering
algorithm and a variant of single-linkage clustering. Single linkage clustering
is versatile, finding applications across various fields such as bioinformatics,
astronomy, and image processing. In bioinformatics, SLINK is employed for
analyzing gene expression~\cite{sibson1973slink}. Astronomers use the
Friends-of-friends algorithm, which is based on single linkage principles, to
detect galaxy clusters~\cite{feigelson2012astronomy}. In the realm of image
processing, it facilitates tasks like image segmentation by grouping similar
pixels~\cite{xu1996segmentation} and explores the morphological characteristics
of images using tree structures \cite{bosilj2018survey, soille2008, havel2013}.
Furthermore, clustering methods that rely on the Minimum Spanning Tree (MST) are
also derived from single linkage clustering \cite{zahn1971graph, xu2002gene,
laszlo2005minimum, jana2009efficient}.

Single linkage clustering methods typically start by generating a minimum
spanning tree (MST). This process requires a distance graph or matrix storing
the pairwise distances between points or nodes. To compute the MST, one can
employ algorithms like Prim's \cite{prim1957}, Kruskal's \cite{kruskal1956},
or~{\boruvka}'s \cite{boruvka1926}.

If clustering graphs or networks, the distance graph is already provided in some
cases. However, in the case of clustering spatial points using \hdbscan, the
graph is not explicitly formed. Instead, spatial search trees such as 
kd-trees \cite{bentley1978,wang2021,prokopenko2022} 
are utilized.

The second step in single linkage clustering is to construct the dendrogram to
extract a hierarchical cluster structure. Once the minimum spanning tree is
built, a dendrogram is produced using the MST, which we will explain in greater
detail.

\subsection{Dendrogram}
\textbf{Dendrogram:} A dendrogram is a directed tree structure used to represent
hierarchical clustering. The tree consists of leaf nodes that represent data
points and internal nodes that represent clusters of data points. The
relationships between clusters are conveyed through directed edges, indicating
whether a cluster contains or is contained in another cluster.

\textbf{Dendrogram in
Single-Linkage Clustering:}  Single-linkage clustering methods use the edges of
the MST to represent the internal nodes of the dendrogram. If an edge is removed
from the MST, it indicates splitting a cluster into two smaller clusters. As a
result, clusters correspond to MST edges, and removing them leads to cluster
separation. Removing an edge can only divide a cluster into two, hence the
dendrogram is typically a binary tree.




\subsection{Dendrogram construction algorithms}
\subsubsection{\textbf{Top-down dendrogram construction}}\label{sec:divisive}

%
\begin{figure}[t]
\captionsetup{labelformat=empty}
\input{algs/dgram-td.tex}
\end{figure}
%
The top-down approach for constructing a dendrogram from a given Minimum
Spanning Tree (MST) employs a divide-and-conquer strategy. This method hinges on
the principle that the dendrogram's root corresponds to the largest edge in the
MST. Removing this edge divides the tree into two subtrees, which may be as
small as a single vertex. These subtrees form the child nodes of the removed
edge in the complete dendrogram. The process is then recursively applied to each
subtree, removing the largest edge until all edges are eliminated. First three
steps for an example \mst is shown in~\Cref{fig:topdown}. Pseudo-code for this
method is presented in \Cref{alg:dgram-td}.

The top-down approach has several drawbacks. Primarily, the top-down approach
often underperforms with skewed dendrograms, which is common in real-world data.
For optimal performance, the algorithm must split the tree by removing the
largest edge, aiming for two subtrees of equal size. However, in skewed
scenarios, this can result in highly disproportionate subtree sizes,
occasionally reducing one subtree to a single vertex. This results in:
\begin{itemize}
  \item \textit{Increased asymptotic cost:} The cost of the algorithm is
  $O(nh)$, with $h$ representing the dendrogram's height. In the case of skewed
  dendrograms, this cost surpasses the $O(n\log n)$ cost associated with
  well-balanced dendrograms. Therefore, the top-down algorithm is not
  work-optimal for highly skewed dendrograms.
  \item \textit{Limited parallelism:}
    The imbalance of two subtrees after an edge removal limits the available
    parallelism. Moreover, the computational depth (the number of required
    parallel steps) is $O(h)$, much higher than the ideal $O(\log n)$.
\end{itemize}

Another limitation of this approach is that its efficiency depends on the
availability of an efficient algorithm for the MST (Minimum Spanning Tree) split
operation. This operation is efficient when the MST is in Euler tour form, but
this is not always the case, such as when the MST is constructed using
Bor\r{u}vka's algorithm~\cite{boruvka1926}. Although parallel algorithms exist
to convert a tree from a set of edges to an Euler tour~\cite{polak2021}, they
are often costly in practice. These limitations make the top-down approach less
effective and efficient when dealing with real-world datasets.

\subsubsection{Bottom-up dendrogram construction}\label{sec:agglomerative}
\begin{figure}[t]
\captionsetup{labelformat=empty}
\input{algs/dgram-uf.tex}
\end{figure}
The bottom-up processes the edges in order from the smallest to the largest.
For each edge, it identifies the clusters containing its vertices, and creates
a new parent cluster by merging the vertices' clusters. To keep track of the
cluster membership, it utilizes the union-find structure~\cite{tarjan1984worst}.
\Cref{alg:dgram-uf} shows pseudo-code for this approach.

In contrast to the top-down algorithm, the bottom-up approach is work-optimal
for any dataset. The most demanding operation, sorting, has $O(n\log n)$
complexity. Processing edges has an asymptotic cost $O(n \mathcal{A}(n))$,
where $\mathcal{A}(n)$ represents the inverse Ackerman function \cite{tarjan1984worst}.
This gives the overall worst-case time complexity of $O(n\log n)$.

The main drawback of the algorithm is that the edges can only be processed
sequentially. For a given edge, it is impossible to say when it should be
processed given the information only about its vertices or adjacent edges. This
is due to the non-local nature of the dendrogram, where the parents of an edge
may come from a completely different part of the graph. Thus, standard methods to parallelize
sequential algorithms, such as~\cite{blelloch2012}, cannot be used here.

\subsubsection{Mixed dendrogram construction}
\label{sec:mixed}
Wang et al. \cite{wang2021} combined top-down and bottom-up approaches to create
a parallel algorithm for the shared memory architecture. The algorithm avoids
the limitations of the sequential bottom-up approach by first removing a set of
the largest edges (a tenth or a half) in a top-down fashion, splitting the tree into
several subtrees. The dendrograms for the subtrees and the top tree are
constructed using the bottom-up approach, then stitched together.

The algorithm exhibits higher degree of parallelism compared to the sequential
counterpart. However, it is still subject to the same limitations for
constructing highly skewed dendrograms, leading to work inefficiency and
imbalance, particularly problematic on GPUs. Moreover, it relies on the Euler
tour implementation for partitioning. Euler tour construction heavily depends
on parallel list-ranking, which significantly underperforms on GPUs compared
to prefix-sum or sort algorithms.

\subsubsection{Image morphological trees}
Single linkage clustering variants commonly used in image analysis include the
max tree, component tree, omega tree, binary partition tree, and
min-tree \cite{bosilj2018survey,soille2008,havel2013}. These trees are similar
to dendrograms used in hierarchical clustering, but they are constructed from
images instead of data points. Some attempts at parallelization have been made
using multithreading \cite{havel2019alphatree} and GPU \cite{blin2022max}.
Although the computation of these trees and dendrograms have similarities, the
requirements differ significantly due to the tree's structure. Their method for
parallelization involves partitioning the image, which results in a complexity
of $O(nh)$, which becomes $O(n^2)$  in the worst case when height is $O(n)$. We
believe our algorithm can be modified to work for these problems.

\section{ PANDORA: parallel dendrogram computation using tree contraction}
\label{sec:alpha-tree}
\newcommand{\mycomment}[1]{\State{{\textcolor{blue}{\(\triangleright\) \small \textit{#1}}}}}

\begin{algorithm}[t]
\caption{Dendrogram Computation using Tree Contraction}
\label{algm:pandora}
\begin{algorithmic}[1]
\Require $T = (V, E)$: minimum spanning tree
\State $E_{\alpha} \gets$ Find edges of contracted tree
\State $T_{\alpha} \gets$ Construct contracted tree by contracting edges in $E - E_{\alpha}$ in $T$
\State $P_{\alpha} \gets$ Compute dendrogram of contracted tree $T_{\alpha}$
\mycomment{Construct complete dendrogram $P$ from $P_{\alpha}$}
\For{ \textbf{each edge $e$ in $E - E_{\alpha}$ \textbf{in parallel}}}
    \mycomment{Find the chain of $e$ using contracted dendrogram $P_{\alpha}$}
    \State $P_{\alpha}(e) \gets$ Find parent of $e$ in $P_{\alpha}$
    \mycomment{Map $e$ to its corresponding chains in $P$}
    \State $C \gets$ Determine of chain containing  $e$
    \State Add $e$ to set of edges in the chain $C$
\EndFor
\mycomment{Order and connect chains to form $P$}
\For{each chain $C$}
    \State Sort edges in $C$ by their index in $E$
    \For{each edge $e$ in $C$, excluding the first}
        \State $P(e) \gets$ Find predecessor of $e$ in $C$
    \EndFor
    \If{$e_s$ is first edge in sorted chain}
    \State $P(e_s) \gets$ $\alpha$-edge for chain $C$
    \EndIf 
\EndFor
\State Connect chains to form the complete dendrogram $P$
\State \textbf{return} $P$
\end{algorithmic}
\end{algorithm}



\begin{figure}[t]
  \centering
  \includegraphics[width=\columnwidth]{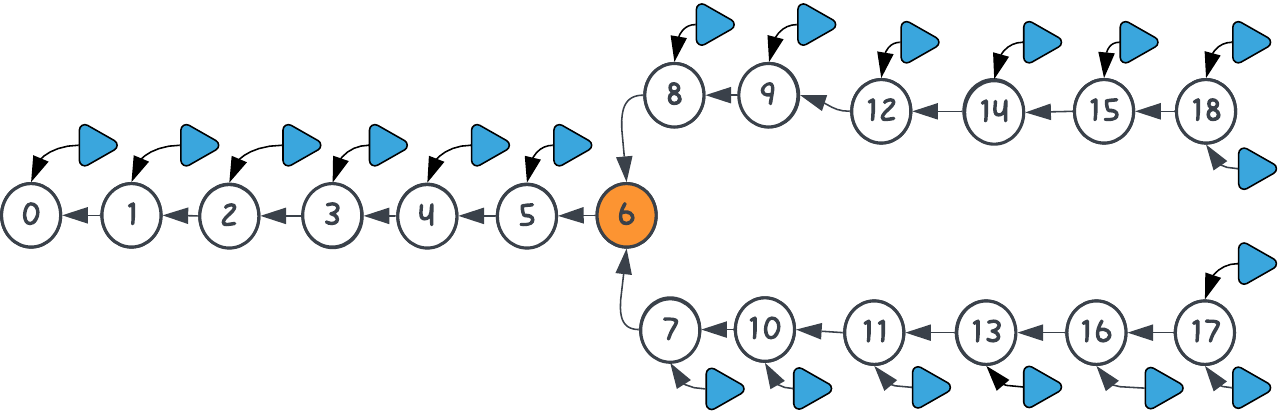}
  \caption{\label{fig:invertedY}\figcaptsize The \NewAlgName leverages
  dendrogram chains to construct them efficiently. This dendrogram can be
  divided into three chains: top, bottom-left, and bottom-right. } 
\end{figure}

\NewAlgName leverages dendrogram chains—continuous segments without branching,
present in highly skewed dendrograms.
Notably, within a chain, the edges are organized
by their index. Consider an inverted Y-shaped dendrogram (\cref{fig:invertedY}):
it consists of three chains—top, bottom-left, and bottom-right. By assigning
each edge to its respective chain, we can efficiently sort and link the chains
to reconstruct the full dendrogram.

To identify these chains, we employ a tree contraction method, which generates a
condensed version of the original dendrogram. It condenses each dendrogram chain
into a single edge in the contracted dendrogram, allowing us to map all edges to
a dendrogram chain and construct the complete dendrogram.

\NewAlgName operates in two main stages. The first is the recursive tree
contraction (\Cref{sec:rtreecontraction}), where we strategically reduce the
tree's size by contracting specific edges. We apply this contraction recursively
to determine the dendrogram of the reduced tree. The second stage is the
dendrogram expansion (\Cref{sec:expansion}), which involves piecing together the
full dendrogram, starting from the most contracted state and incrementally
expanding it.

Careful consideration is necessary during tree contraction to ensure only valid
contractions are performed. The algorithm's correctness, including the criteria
for valid contractions, is discussed in \Cref{sec:correctness}.

Furthermore, in \Cref{sec:complexity}, we will explore the minimum complexity
any dendrogram construction algorithm must adhere to and show that our algorithm
meets this optimal bound. 

We begin by defining essential terms and notations
for our algorithm's description.




%
\begin{figure*}[t]
\subfloat[\label{fig:yed:MST10} An \mst~(not to scale)]{\includegraphics[height=0.21\textheight]{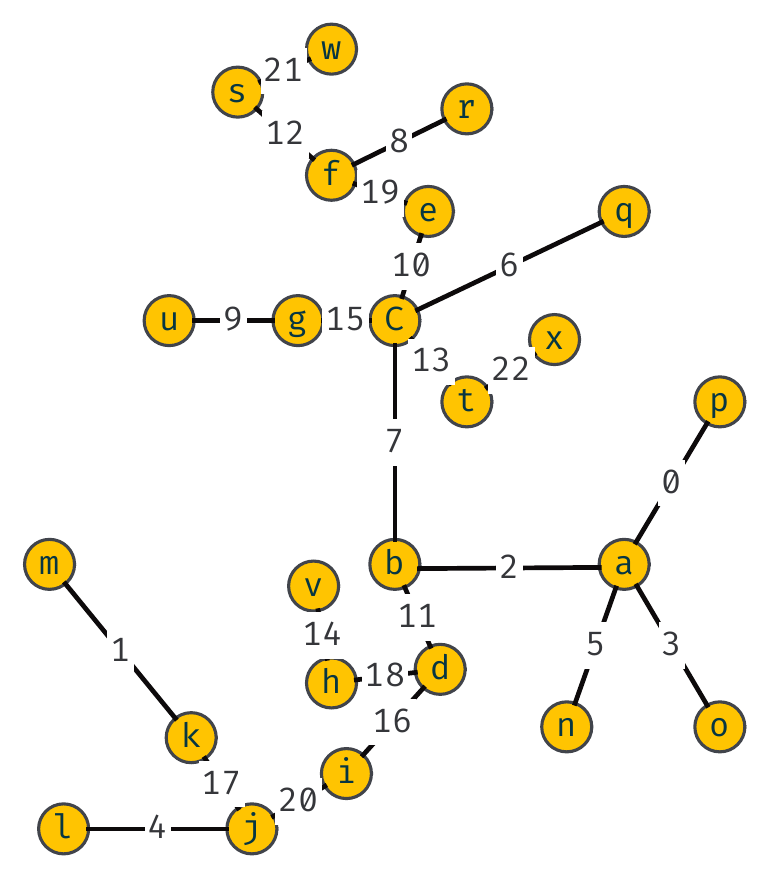}}
\hfill
\subfloat[\label{fig:yed:MST15} highligted $\alpha$-edges]{\includegraphics[height=0.21\textheight]{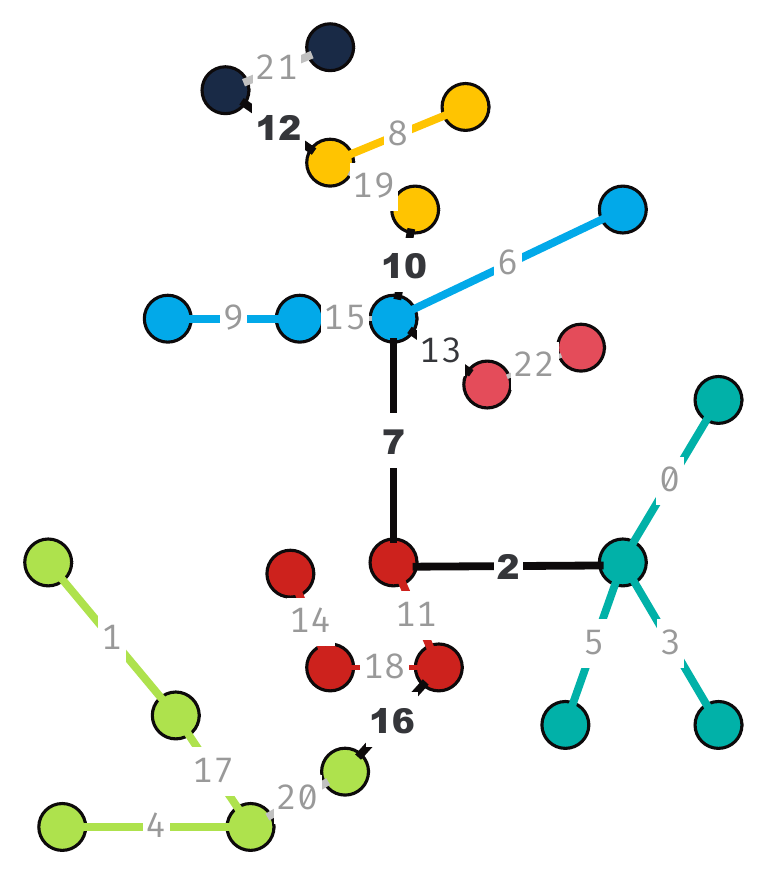}}
\hfill
\subfloat[\label{fig:yed:MST20} $\alpha$-\mst]{\includegraphics[height=0.21\textheight]{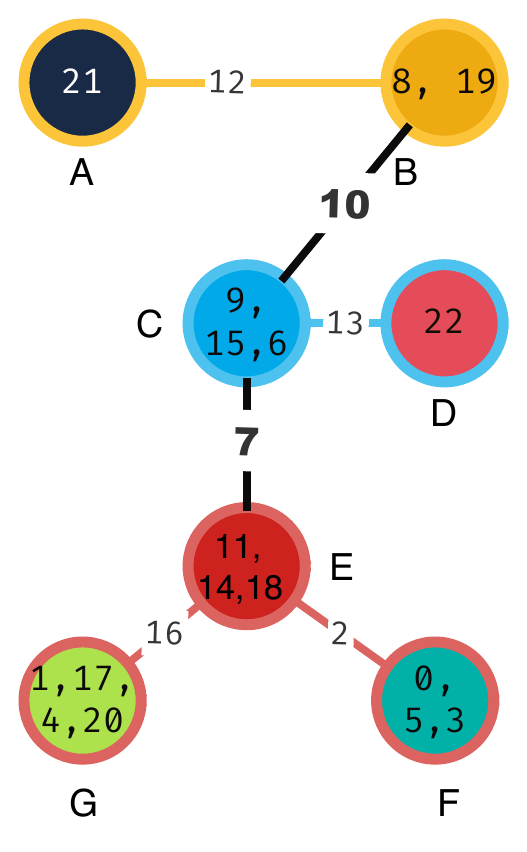}}
\hfill
\subfloat[\label{fig:yed:MST30}
$\beta$-\mst]{\includegraphics[height=0.22\textheight]{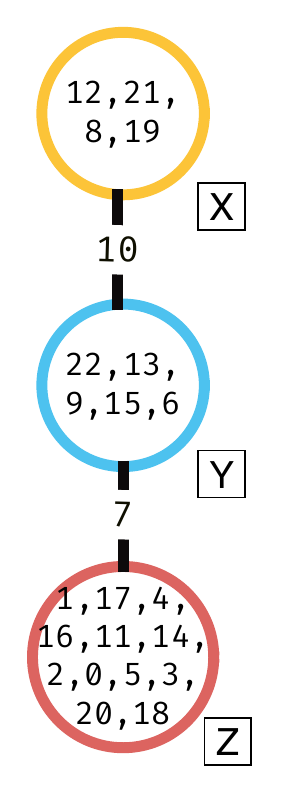}}
\caption{\label{fig:yed}\figcaptsize Recursive Tree Contraction
(\Cref{sec:rtreecontraction}). The original~\mst is shown in
\cref{fig:yed:MST10}. \cref{fig:yed:MST15} highlights the $\alpha$-edges of the
same~\mst. Removing these edges results in a division of the tree into
components, each marked with a different color in \cref{fig:yed:MST15}. These
components are then contracted into $\alpha$-vertices. The $\alpha$-vertices
and $\alpha$-edges form the $\alpha$-\mst, the first level of contraction,
depicted in \cref{fig:yed:MST20}. This contraction process continues to a second
level to form the $\beta$-\mst, as shown in \cref{fig:yed:MST30}. Both $\alpha$ and
$\beta$~\mst encapsulate their respective edges within supervertices. } 
\end{figure*}

\subsection{Terminology and notation}
\label{sec:notation}
\subsubsection{Minimum spanning tree structure}
Consider a Minimum Spanning Tree (MST) $T=\{ V, E, W\}$, where we aim to
calculate its dendrogram. Let $n_{v}$ denote the number of vertices, and
$n=n_{v} -1$ represent the number of edges in the MST. The dendrogram
computation  begins with sorting the edges in $T$ by weight in descending order,
which requires $O(n \log n)$ time. This sorting step is crucial for the
dendrogram's computation, ensuring that edges with equal weights are ordered
consistently to preserve the dendrogram's uniqueness and facilitate the
validation of our method. For subsequent discussions, we will assume that the
edges are already sorted in this manner.  We use the following notation to
describe the incidence structure of the tree.

\textbf{Incident edges:} For a vertex $v \in V$, the set
$\Incident{v}$ includes all edges incident to $v$. For example, as shown in
\Cref{fig:yed:MST10}, $\Incident{a}$ comprises the edges $ \{ e_{0}, e_{2},
e_{3}, e_{5} \}$.

\textbf{Maximum incident edge:} $\maxIncident{v}$ denotes the edge with the
highest index in $\Incident{v}$. From the previous example, $\maxIncident{a} =
e_{5}$.

\textbf{Neighboring edges $\edgeneighbor{e}$:} For any edge $e$, the set
$\edgeneighbor{e}$ consists of edges that share a vertex with $e$. Specifically,
if $e$ connects vertices $v$ and $u$, then $\edgeneighbor{e} = \Incident{v} \cup
\Incident{u}$.

\textbf{Edge contraction of a tree}
We can create a contracted tree $T_{c}= (V_c, E_c)$ from a tree $T$ and a subset
of edges $E_{c}$. To do this, we contract the edges in the set $E - E_{c}$.
Initially, $V_{c}$ is identical to $V$. For each edge $e = (u, v) \in E -
E_{c}$, we merge $u$ and $v$ into a single supervertex $vu$, removing $u$ and
$v$ from and adding supervertex $vu$ to $V_{c}$. The supervertex $vu$ inherits
the neighbors of $u$ and $v$, except for $u$ and $v$ themselves. This
contraction is repeated for all edges in $E - E_{c}$. The resulting contracted
tree $T_{c}$ comprises the modified vertex set $V_{c}$ and the edge subset
$E_{c}$.

\subsubsection{Dendrogram structure}
A dendrogram is a directed rooted binary tree, denoted as $\mathcal{D} = \{V_{d},
E_{d} \}$. Its vertex set $V_{d}$ comprises two types of nodes: vertex nodes,
representing the vertices of the Minimum Spanning Tree (MST), and edge nodes,
representing the MST's edges. Thus, we have $V_d = V \cup E$, with vertex nodes
located at the leaves corresponding to individual data points, and edge nodes as
internal nodes signifying clusters.

The dendrogram's structure is established through directed links that outline
parent-child relationships between nodes. These relationships determine the edge
set $E_d$, as defined by the parent function $P$. Specifically, $E_d$ is
composed of directed edges $(v \rightarrow u)$ where $P(v) = u$, with $v$ being
a member of $V_d$—either a vertex or an edge of the MST—and $u$ representing an
edge in the MST. Thus, dendrogram computation is equivalent to determining the
parent  $P$ for all nodes in $V_d$.


\paragraph{Parent of a vertex-node:} In a dendrogram, the parent of a vertex-node
$v\in V$ is the edge that disconnects $v$ from the tree when removed during the
top-down process. This process entails sequentially eliminating edges in
$\Incident{v}$, beginning with the heaviest (the smallest index) and concluding with
the lightest (the largest index). Thus, the dendrogram parent of vertex $v$ is the
edge in $\Incident{v}$ with the largest index.
\begin{equation}
  \label{eq:vertexparent}
  P(v) = \maxIncident{v} \qquad \forall v \in V;
\end{equation}
For example, in \Cref{fig:yed:MST10}, $P(a) = e_{5}$. The incidence structure of
the tree allows us to determine the parents of all vertex nodes $v \in V$.
However, \emph{identifying the parents of the edge nodes} presents the main
challenge.

\begin{figure}
  \centering
  \includegraphics[width=\columnwidth]{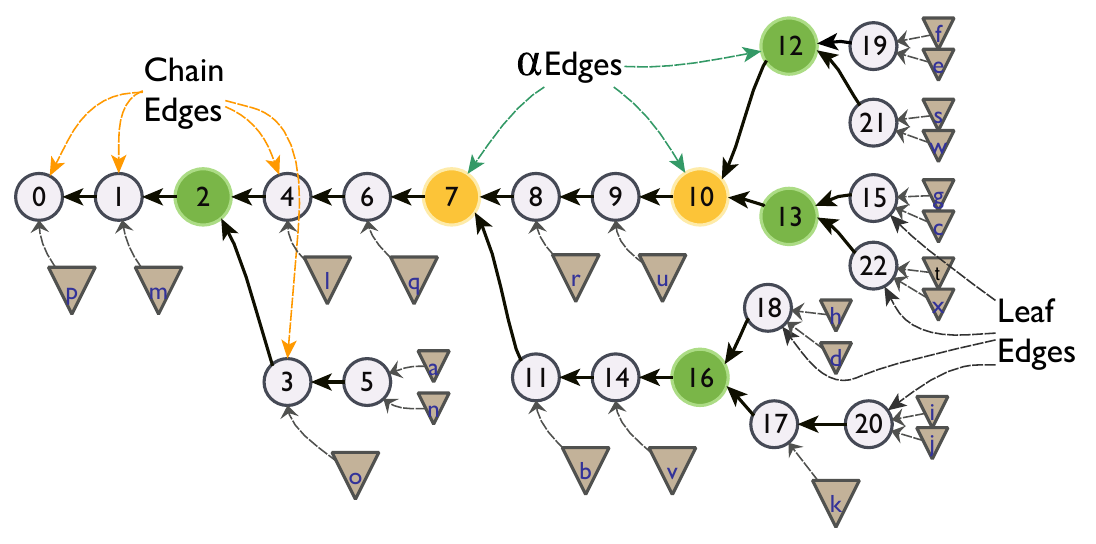}
  \caption{\label{fig:edgetaxonomy}\figcaptsize Dendrogram corresponding
  to the MST in \Cref{fig:yed:MST10}. We mark the vertex nodes as triangles and
  edge nodes as circles. The edge nodes are further classified into leaf, chain,
  and $\alpha$-edges.}
\end{figure}

\paragraph{Types of edge nodes:}
%
We can classify edge nodes in a dendrogram based on the number of vertex nodes
they have as their children. In a binary dendrogram, each edge node has exactly
two children, which can be either an edge or a vertex node. This leads to three
types of edge nodes (shown in \Cref{fig:edgetaxonomy}):
\begin{itemize}
  \item \textbf{Leaf edges:} have two vertex nodes as children. 
  \item \textbf{Chain edges:} have one vertex node and one edge node as children.
  \item \textbf{$\alpha$-Edges:}  do not have any vertex node as a child; both of their children are edge nodes as well.
\end{itemize}

Using the MST's local incidence structure and~\Cref{eq:vertexparent},
we can identify the parent of a vertex node. This information allows us to
calculate the number of children for any edge node. Consequently, we can
classify the edge node as a leaf, chain, or $\alpha$-edge based on its local
incidence structure alone. 

However, discerning the parent of an edge node
through this local structure alone is challenging due to the more complex
parent-child relationships between edge nodes, which often extend beyond
immediate neighbors in the MST. 
For example, in
\Cref{fig:yed:MST15}, edge node $e_2$ has $e_1$ as its parent, yet $e_1$ and
$e_2$ are situated in completely separate sections of the tree.

\subsubsection{Dendrogram chains and skewness}
\textbf{Dendrogram chains:}A dendrogram 'chain' is a lineage in a dendrogram
that extends without branching. It comprises a series of chain edges followed by
a final non-chain edge, which can be either a leaf or an $\alpha$ edge. Each
edge node in the chain, except for the last one, has a single child that is the
subsequent edge in the chain. The chain's end is marked by a leaf or an $\alpha$
edge. Chains ending in a leaf edge are  referred to as \emph{leaf chains}.

\textbf{Skewness of the Dendrogram:} We define a dendrogram's skewness as the
ratio of the height of the dendrogram to its ideal height = $\log_{2} n$. A
large number of chains in the dendrogram can lead to an increase in its height
and consequently, its skewness.

Developing a parallel dendrogram algorithm is difficult because real-world
dendrograms are often highly-skewed. Even dendrograms constructed from low
dimensional Gaussian distributions have heights far from a balanced tree height.
 This is a common occurrence, as we demonstrate in our
results section for various datasets, from GPS location data to cosmology and
power usage (see \Cref{tab:datasettable}.)

\begin{figure}[t]
  \centering
  \subfloat[\label{fig:yed:dgram20}$\alpha$-Dendrogram]{\includegraphics[height=0.18\textheight]{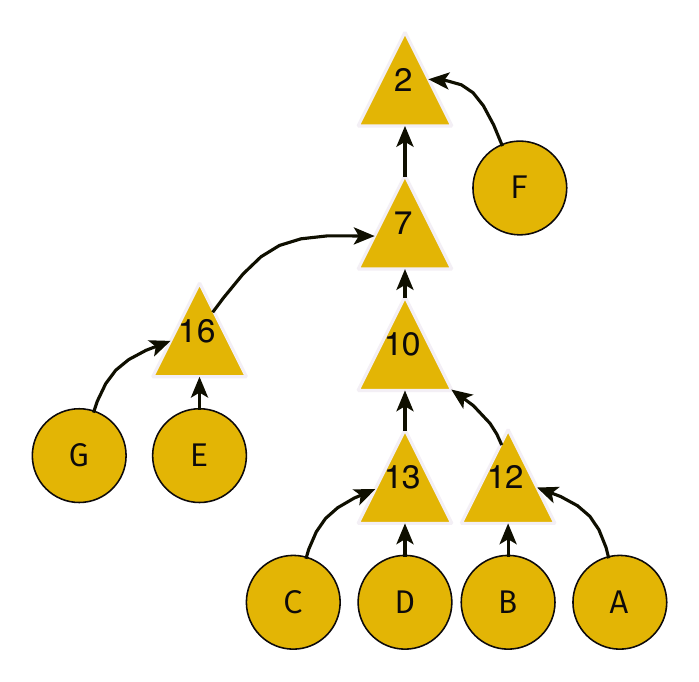}}
  \hfill
  \subfloat[\label{fig:yed:dgram30}$\beta$-Dendrogram]{\includegraphics[height=0.18\textheight]{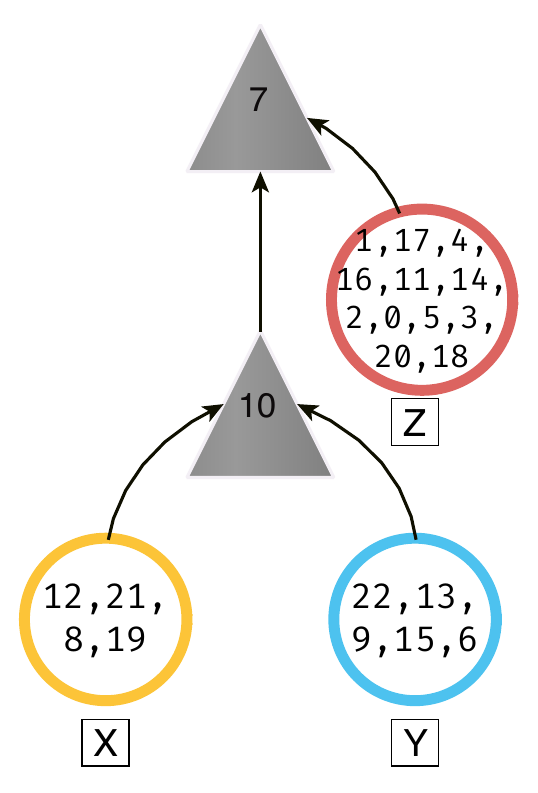}}
  \caption{\label{fig:yed:abdgrams}\figcaptsize The $\alpha$-dendrogram and
  $\beta$-dendrogram for the $\alpha$-\mst(\Cref{fig:yed:MST20}) and
  $\beta$-\mst (\Cref{fig:yed:MST30}) respectively.}
\end{figure}


\begin{figure*}
  \centering
  \subfloat[\label{fig:yed:exp0}$\alpha$-Dendrogram]{\includegraphics[width=0.19\textwidth]{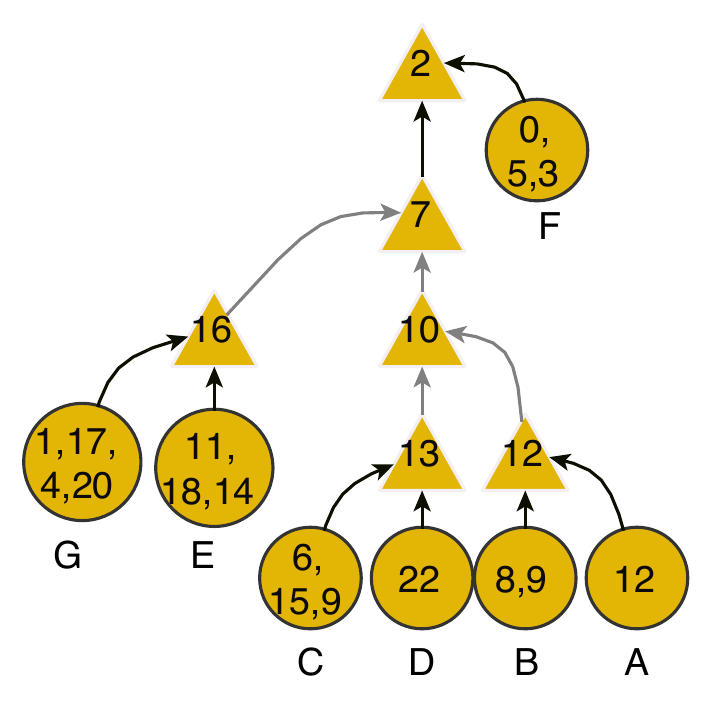}}
  \subfloat[\label{fig:yed:exp1}  $\alpha$-Leaf chains]{\includegraphics[width=0.2\textwidth]{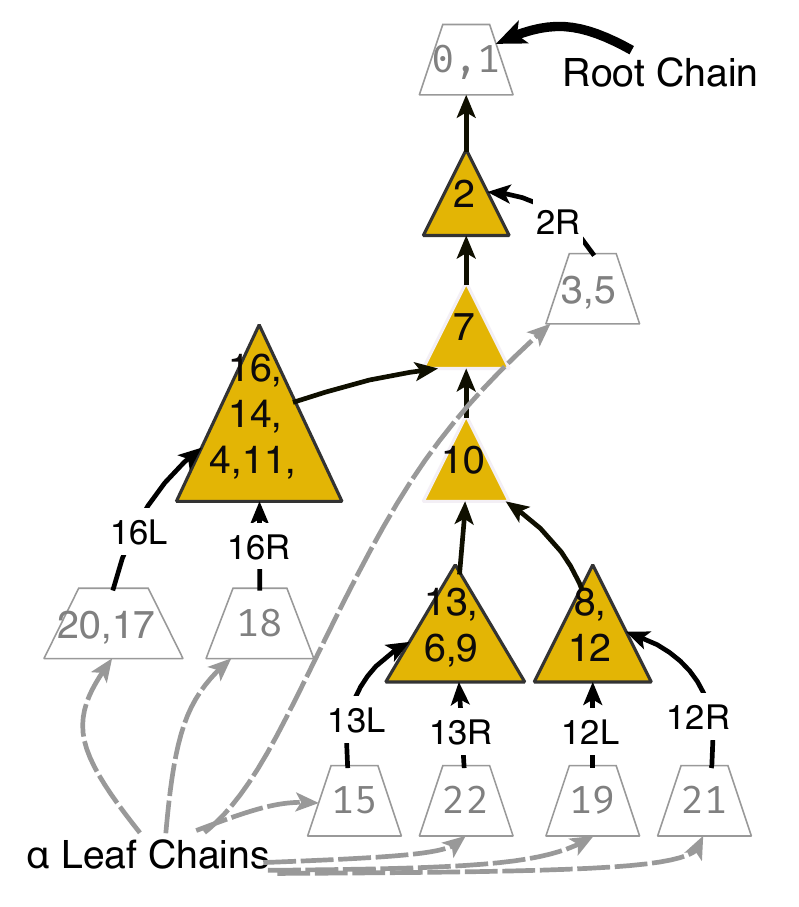}}
  \subfloat[\label{fig:yed:exp2} $\beta$-Leaf chains]{\includegraphics[width=0.2\textwidth]{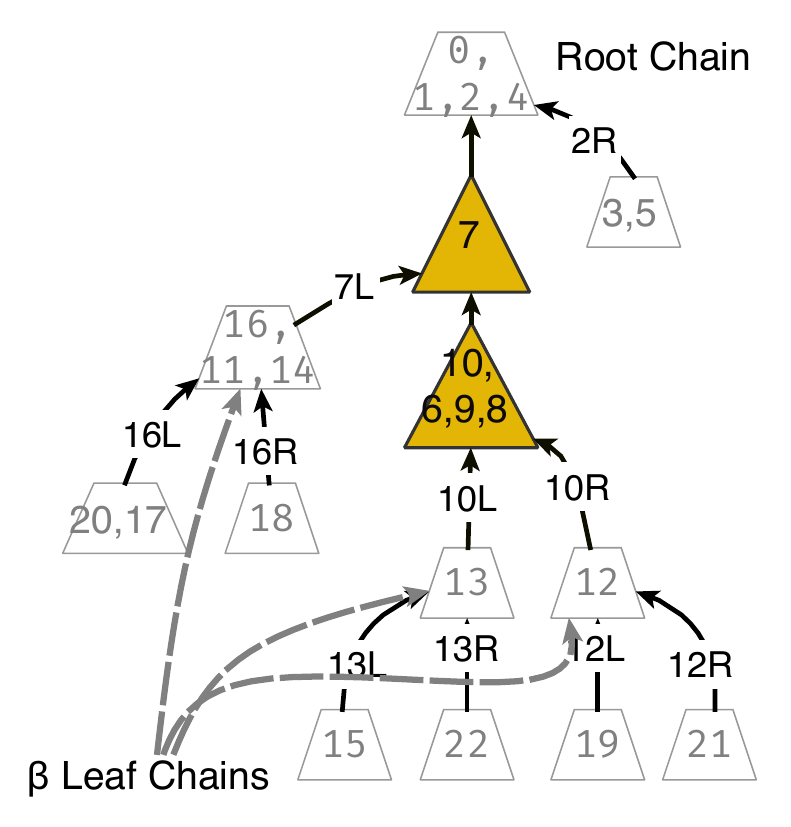}}
  \subfloat[\label{fig:yed:exp3}\footnotesize Final chains]{\includegraphics[width=0.21\textwidth]{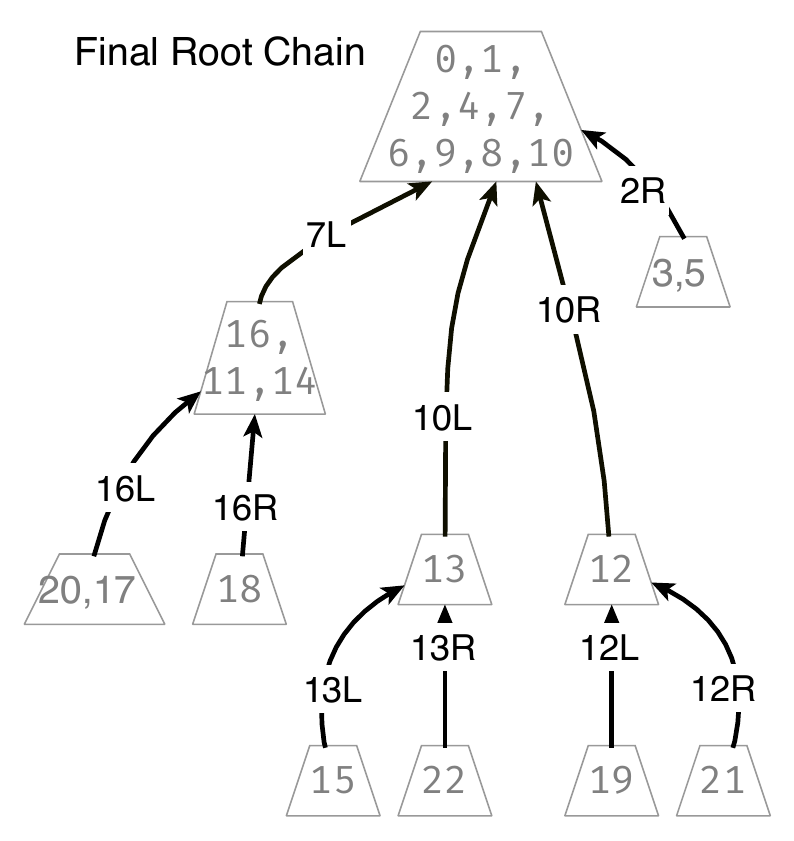}}
  \subfloat[\label{fig:yed:exp4}Final Dendrogram]{\includegraphics[width=0.2\textwidth]{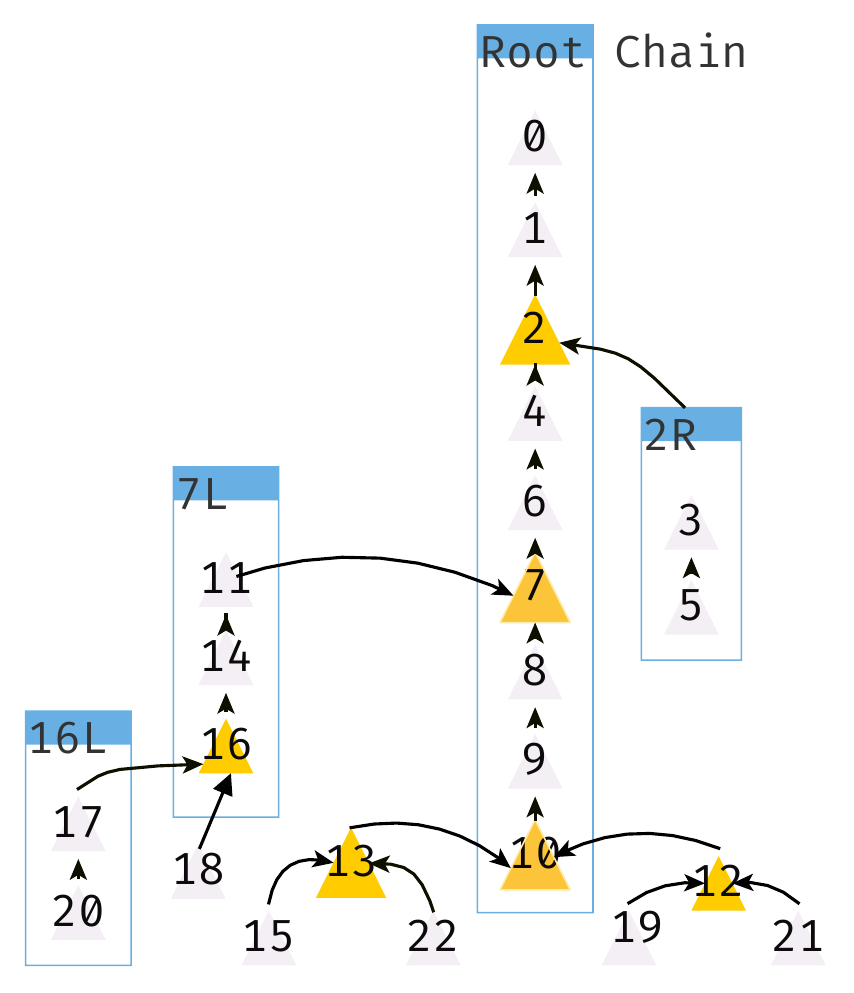}}
  \caption{\label{fig:yed:expansion} \figcaptsize The process of expanding the complete dendrogram from the contracted dendrogram.
%
%
1. In \Cref{fig:yed:exp0}, the $\alpha$ dendrogram is shown, with triangles
representing $\alpha$ edges and circles representing $\alpha$ vertices. All non-$\alpha$
edges are displayed within their corresponding $\alpha$ vertices.
2. First, we identify non-$\alpha$ edges that belong to an $\alpha$ leaf
chain by comparing it to the parent of the $\alpha$
vertex containing the edge (\Cref{fig:yed:exp0}). Any edge with a higher
index than the $\alpha$ parent is considered part of the $\alpha$ leaf chain.
3. We then check the non-$\alpha$ edges that are not part of any leaf chains to
see if they belong to a $\beta$ leaf chain. To do this, we compare each
non-$\alpha$ edge to the $\beta$ edge. The $\beta$ edge is the parent of the
$\alpha$ parent we identified in the previous step. Any edge with a higher index
than the $\beta$ parent is marked as part of its $\beta$ leaf
chain(\Cref{fig:yed:exp2}). This process continues until all edges are assigned
to a leaf chain of some level, or there are no more contraction levels
remaining.
4. Any unassigned edges are allocated to the root chain if further contraction
is not possible, as depicted in \Cref{fig:yed:exp3}.
5. Finally, each chain is sorted to form partial dendrograms. These partial
dendrograms are then merged to produce the final dendrogram, as shown in
\Cref{fig:yed:exp4}.
}
\end{figure*}

\subsection{Recursive tree contraction}
\label{sec:rtreecontraction}
Pandora constructs a condensed version of \mst by contracting all edges except
the $\alpha$ edges. The dendrogram of this condensed MST is identical to the one
obtained by merging chain and leaf nodes in the full dendrogram, proved in
\Cref{sec:correctness}. This simplified dendrogram effectively captures the full
dendrogram's structure.

\textbf{Computing $\alpha$-Edges:} 
An $\alpha$-edge is a type of edge-node that has two children that are also
edge-nodes.  If an edge-node $e_{k}=\Edge{v}{u}$ has a vertex node as a child,
it will be either $v$ or $u$. The parent of $v$ is given by $P(v) =
\maxIncident{v}$. In case $k$ is not equal to $\maxIncident{v}$ and
$\maxIncident{u}$, then $e_{k}$ is not a parent of either vertex node incident
on it. This means that both its children are edge-nodes. Therefore, an edge-node
$e_{k}=\Edge{v}{u}$ is an $\alpha$-edge if: 
\begin{equation}
    \label{eq:alphaedge}
k  \neq  \maxIncident{v} \ \; \text{and}\; k  \neq  \maxIncident{u}. 
    \end{equation}
\Cref{eq:alphaedge}  allows for the identification of all
$\alpha$-edges using a constant-time operation for each edge.

In \Cref{fig:yed}, we demonstrate the process. Let's look at the Minimum
Spanning Tree (MST) example in \Cref{fig:yed:MST10}. In \Cref{fig:yed:MST15}, we
highlight the $\alpha$-edges of the MST. For example, $e_{16}=\Edge{i}{d}$ is an
$\alpha$-edge because $\maxIncident{i} = 20$ and $\maxIncident{d} = 18$, which
are both different from 16. None of the terminal edges are
$\alpha$-edges. For instance, $e_{1}=\Edge{m}{k}$ is not an $\alpha$-edge
because $\maxIncident{m} = 1$. Additionally, several internal edges like
$e_{20}$ and $e_{17}$ are also non-$\alpha$ edges.


\textbf{Computing $\alpha$-\mst:}
We first identify the $\alpha$ edges in the original tree. Then, we contract the
remaining non-$\alpha$ edges to create a new tree called $\alpha$-\mst
($T_{\alpha}$). In $T_{\alpha}$, each vertex is an $\alpha$-vertex, representing
multiple vertices from the original tree that have been contracted. We also keep
track of the mapping between the original vertices and their counterparts in
$T_{\alpha}$, which is important for tracing back to the original structure.

For example, in \Cref{fig:yed:MST15}, we show an MST with highlighted $\alpha$
edges. We contract the non-$\alpha$ edges to obtain the contracted tree shown in
\Cref{fig:yed:MST20}. The vertices in \Cref{fig:yed:MST15} that are merged to
form a supervertex are colored with the same color. In \Cref{fig:yed:MST20},
vertices $a$, $n$, $o$, and $p$ are merged together to form a single
supervertex, all colored cyan.

\textbf{Multilevel tree contraction:} 
The dendrogram of $T_{\alpha}$ can be computed by recursively applying the same
edge contraction strategy. This leads to a multilevel tree contraction scenario.
In each iteration, we find $\alpha$-edges present at that level of the
contracted tree, and contract the remaining edges to get the tree for the next
iteration. The recursion stops when there are no more $\alpha$-edges left. At
this point, we get a single chain dendrogram, obtained by sorting the edges
according to their indices.

\Cref{fig:yed:MST30} illustrates the $\beta$-\mst, which emerges from the second
contraction level applied to the $\alpha$-\mst depicted in \Cref{fig:yed:MST20}.
The $\beta$-\mst can no longer be contracted, hence the recursion concludes. The
final contraction stage is represented by the $\beta$-dendrogram, displayed in
\Cref{fig:yed:dgram30}.

To summarize, we begin with a complete Minimum Spanning Tree (MST). We create a
series of smaller trees by performing multilevel tree-contractions on this tree.
We continue this process until we have a tree without any $\alpha$ edges. The
dendrogram of this tree forms a single chain, which we can obtain by sorting.
This results in a highly compact dendrogram. In the next section, we explain how
to expand this condensed dendrogram into a comprehensive one.


\subsection{Efficient dendrogram expansion}
\label{sec:expansion}
In this section, we will explain how to construct a complete dendrogram from a
condensed dendrogram, which we call \emph{expansion}. Then, we explain the
expansion process for a single-level contraction in \Cref{sec:expansion:alpha}. However, single-level
contraction is not optimal for reconstructing a dendrogram. Therefore, we have
developed an expansion algorithm that utilizes all contraction levels, described in \Cref{sec:expansion:optimal}.


\begin{figure}[tbp]
  \includegraphics[width=\columnwidth]{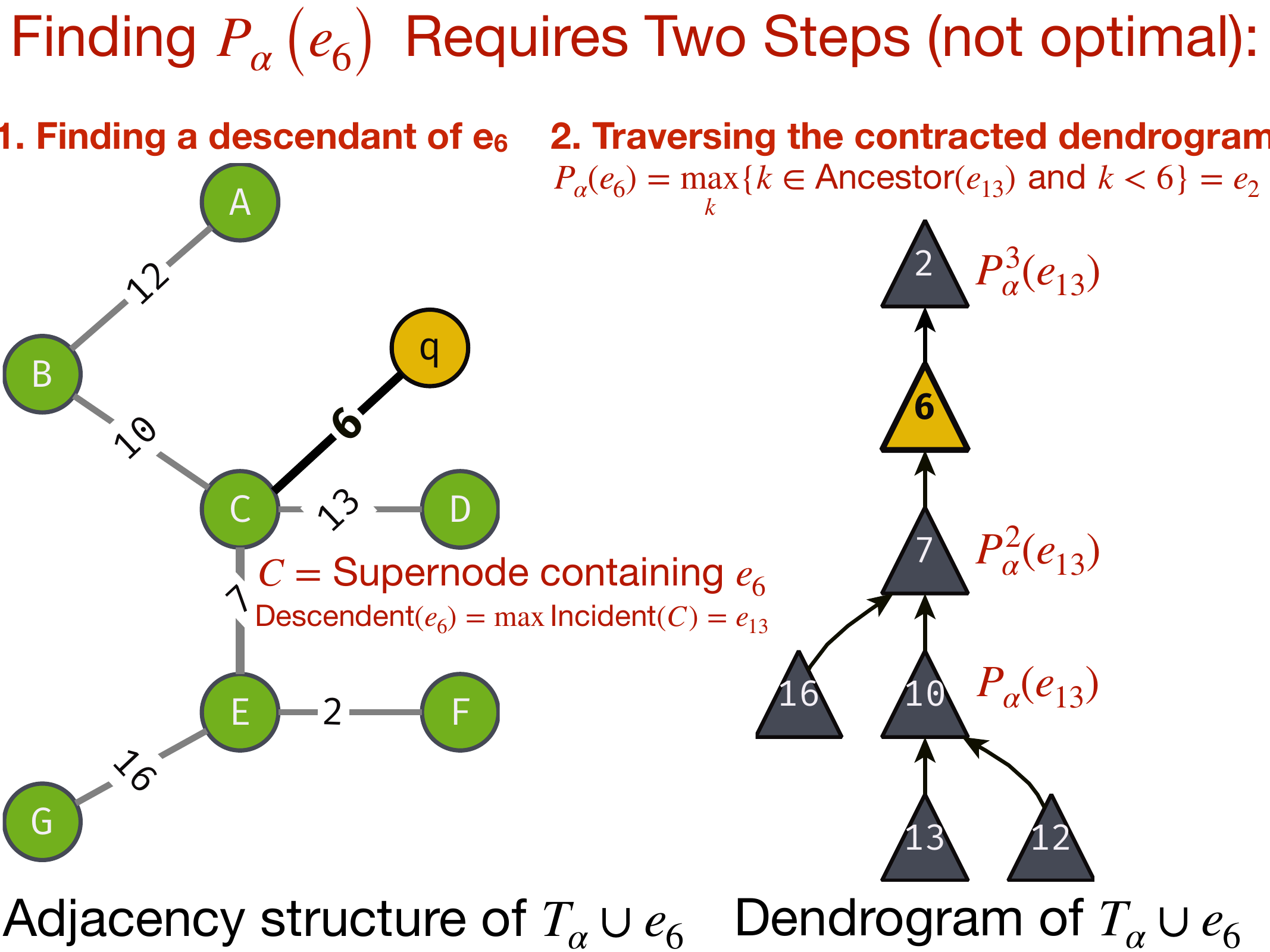}
  \caption{\label{fig:e6insert}\figcaptsize Inserting a non-$\alpha$
  edge $e_{6}$ into the $\alpha$-dendrogram. In this process, a single level
  contraction is done within the supervertex to find the parent of the edge. For
  example, in the case of edge $e_6$, we identify its parent by finding the
  maximum incident edge of the supervertex $C$, which is $e_{13}$. We consider
  $e_{13}$ as a descendant of $e_6$. To locate the parent of $e_6$, we go
  through the dendrogram upwards and select the ancestor with the highest index
  among all ancestors of $P_\alpha(e_{13})$. This way, we determine that the
  parent of $e_6$ is $e_2$, represented in the dendrogram of $T_\alpha \cup
  e_6$. However, this accurate method can be inefficient because it may require
  traversing the entire dendrogram. }
\end{figure}

\subsubsection{Dendrogram expansion from single-level tree contraction}
\label{sec:expansion:alpha}
Given an input Minimum Spanning Tree (MST) called $T$, a contracted tree
containing all the $\alpha$ edges called $T_\alpha$, and the dendrogram of
$T_\alpha$ specified with the parent-child relation $P_\alpha$, our objective is
to assign each non-$\alpha$ edge to a specific dendrogram chain. To accomplish
this, we follow these steps:

\begin{enumerate}
    \item Find the $\alpha$-vertex $\alphaVertexOf{e}$ containing $e$.
    \item Determine the dendrogram parent of $\alphaVertexOf{e}$ in $\alpha$-dendrogram: $P_\alpha(\alphaVertexOf{e}).$
    \item Traverse the $\alpha$-dendrogram to find the $P_{\alpha}(e)$: Starting from $P_\alpha(\alphaVertexOf{e}).$
     and traverse the dendrogram upwards until an $\alpha$ edge with a smaller index than $e$ is encountered.
\end{enumerate}

Let's consider how to map the edge $e_6$ into the dendrogram for the minimum
spanning tree (MST) shown in \Cref{fig:yed}. The $\alpha$-vertex that contains
$e_6$ is denoted as $\alphaVertexOf{e_6} = C$ in \Cref{fig:yed:MST20}. The
parent of $\alphaVertexOf{e_6}$ in the $\alpha$-dendrogram is
$P_\alpha(\alphaVertexOf{e_6}) = e_{13}$ shown in \Cref{fig:yed:dgram20}. To
find the parent of $e_6$, we traverse the alpha dendrogram from bottom to top,
starting at $e_{13}$. We look for an $\alpha$ edge with a lower index than
$e_6$(see \Cref{fig:e6insert}). In this case, the lower-indexed edge is $e_2$,
which becomes the $\alpha$-parent of $e_6$. Since $e_7$ is placed on the left
side of $e_2$, we assign $e_6$ to the chain $2L$.

However, this method is not optimal as it requires traversing the alpha
dendrogram in a bottom-up order for all non-$\alpha$ edges. In the worst case,
the height of the alpha dendrogram tree can be $\bigo{n}$. Consequently, finding
chains for all non-$\alpha$ edges would require $O(n^2)$ work.

\subsubsection{Efficient dendrogram expansion from multilevel tree contraction}
\label{sec:expansion:optimal}

We can optimize the dendrogram expansion process by making two key observations.

First, we can quickly identify edges that are part of a leaf chain without
traversing the entire $\alpha$ dendrogram. Second, for edges that are not in a
leaf chain of the $\alpha$ dendrogram, we can efficiently check if they are in a
leaf chain of the $\beta$ dendrogram. 

By recursively applying this process, we can associate all edges with a leaf
chain at some level. Instead of traversing the $\alpha$ dendrogram from the bottom
up, which can be inefficient due to its height, we start checking for leaf chain
membership at level of dendrograms. This approach is more efficient since the
number of contraction levels  \(\log_{2} n\).

\textbf{Leaf Chains:}
Leaf chains are linked to their respective dendrograms.  An $\alpha$ leaf chain
refers to a sequence that concludes with a leaf edge in the dendrogram. For
example, in \Cref{fig:yed:exp4}, the sequence denoted by $16L$ qualifies as an
$\alpha$ leaf chain. Upon removing all $\alpha$ leaf chains from a dendrogram,
new leaf chains emerge with an $\alpha$ edge as terminal, termed $\beta$ leaf
chains. A $\beta$ leaf chain may encompass multiple $\alpha$ chains that are not
leaves, and the $\alpha$ edges linked to these chains become part of the $\beta$
chain. These $\alpha$ and non-$\alpha$ edges together create an unbroken lineage
within the full dendrogram. This concept of leaf chains can be extended to
higher-level contractions as well.

\textbf{Mapping edges to a leaf chain:} 
To construct the dendrogram efficiently, we utilize a constant-time method to
determine if an edge is part of a leaf chain at any level. We aim to identify
the earliest contraction level at which each edge becomes part of a leaf chain.

For each non-$\alpha$ edge $e$, we first check if it belongs to an $\alpha$ leaf
chain by comparing the index of the $\alpha$ parent of $\alphaVertexOf{e}$ to
the index of $e$. If the $\alpha$ parent's index is lower, $e$ is part of an
$\alpha$ leaf chain. If not, we check for inclusion in a $\beta$ leaf chain by
examining the $\beta$ parent of $\betaVertexOf{e}$ in the $\beta$ dendrogram and
comparing it to $e$. Determining an edge's leaf chain membership at any level
takes constant time ($O(1)$).

To map non-$\alpha$ edges to their respective chains, we check if they are part
of an $\alpha$ leaf chain. If not, we then check if they belong to a $\beta$
leaf chain, and so on, until the edge is placed in a chain. Any edges not
assigned to a chain at the final level are grouped together in the root chain.
The maximum number of contraction levels determines the cost of associating an
edge with a  leaf chain.

\textbf{Example:}
To map a non-$\alpha$ edge, such as $e_{15}$, to a chain, we first identify its
$\alpha$-vertex, $C$, as shown in Figure \ref{fig:yed:MST20}. The
$\alpha$-parent of $C$, $P_{\alpha}(C)$, is 13 (Figure \ref{fig:yed:dgram20}).
Since 15 is greater than 13, $e_{15}$ is part of the leaf chain associated with
$e_{13}$, specifically the $13R$ chain.

Next, consider edge $e_{11}$, with $\alpha$-vertex $E$ and $\alpha$-parent
$P_{\alpha}(E) = 16$. As 16 is greater than 11, $e_{11}$ is not in the leaf
chain. We then determine if it's part of a $\beta$-leaf chain. The
$\beta$-vertex containing $e_{11}$ is $X$ (Figure \ref{fig:yed:MST30}), with
$\beta$-parent $P_{\beta}(X) = 7$ (Figure \ref{fig:yed:dgram30}). Since 11 is
greater than 7, $e_{11}$ is indeed in a $\beta$-leaf chain.

The mapping process is illustrated in Figure \ref{fig:yed:expansion}. We start
with the $\alpha$-dendrogram (Figure \ref{fig:yed:dgram20}), determine the
$V_{\alpha}$ for all edges (Figure \ref{fig:yed:exp0}), and identify those in an
$\alpha$ leaf chain (Figure \ref{fig:yed:exp1}). Edges not in an $\alpha$ leaf
chain are then checked against $\beta$ leaf chains (Figure \ref{fig:yed:exp2}).
Finally, edges not in a $\beta$ leaf chain are assigned to the root chain
(Figure \ref{fig:yed:exp3}).

\subsubsection{Final dendrogram construction}
In the previous step, we assigned all the edges to a leaf chain or root chain.
Now, we will use this information to build the entire dendrogram. This process
involves two main steps:

\textbf{Sorting the Chains:} Sorting each chain forms partial dendrograms. In
the sorted chain, we assign the parent of each edge to its predecessor in the
sorted chain, except for the first edge in the chain, which is handled in the
next step.

\textbf{Stitching the Chains}: Each chain is a leaf chain of a contracted edge
from a certain level of contraction. The parent of the first edge in the chain
is marked as the corresponding edge for that chain. For instance, the
$\alpha$-leaf chain $(17,20)$ corresponds to the $\alpha$-edge $e_{16}$
(\Cref{fig:yed:exp1}), while the $\beta$-leaf chain $(16,11,14)$ corresponds to
the $\beta$-edge $e_{7}$ (\Cref{fig:yed:exp2}). Therefore, the parent of
$e_{17}$ in the $\alpha$-leaf chain $(17,20)$ is the $\alpha$-edge $e_{16}$, and
the parent of $e_{11}$ is the $\beta$-edge $e_{7}$, shown in
\Cref{fig:yed:exp4}. By connecting chains in this way, the complete dendrogram
is formed shown in \Cref{fig:yed:exp4}.

\subsection{Correctness of the \NewAlgName Algorithm}
\label{sec:correctness}
To establish the correctness of our algorithm, we first define the
\textit{Lowest Common Dendrogram Ancestor (LCDA)}, which represents the earliest
shared ancestor edge in the dendrogram for any two edges. We then introduce a
theorem that links the LCDA to the structure of the minimum spanning tree,
asserting that the LCDA of two edges is equivalent to the edge with the smallest
index on the path that connects them in the tree.

Subsequently, we explore the impact of tree contraction on the dendrogram's
structure. We introduce the concept of \textit{dendrogram lineage preserving
tree contraction}, a type of tree contraction that retains the dendrogram
ancestry properties of the original tree. We outline the necessary and
sufficient conditions for a tree contraction to be considered
hierarchy-preserving.

Finally, we validate our algorithm for generating dendrograms through tree
contraction. We demonstrate that contracting $\alpha$-edges meets the conditions
necessary to ensure the algorithm's correctness.

\begin{definition}[Path]
  For any given tree \( T \), a path connecting two edges \( e_i \) and \( e_j
  \), referred to as \( \text{Path}(e_i, e_j) \), is a sequence of edges \(
  \{e_{a1}, e_{a2}, \ldots, e_{an}\} \) present in \( T \) such that \( e_{a1} =
  e_i \), \( e_{an} = e_j \), and for every \( k \) (where \( 1 \leq k < n \)),
  the edges \( e_{ak} \) and \( e_{a(k+1)} \) are adjacent in \( T \).
  \end{definition}

\begin{definition}[Ancestors of an Edge]
  For any edge $e\in T$,  \( \text{Ancestors}(e) \) denotes its set of all
  ancestors including itself. If \( P(e)\) is the dendrogram parent of edge $e$,
  then \( \text{Ancestors}(e) \) is defined  follows:
  \[
  \text{Ancestors}(e) = \cbra{e, P(e), P^2(e), \ldots, P^k(e)}
  \]
  where $P^k(e)$ is the dendrogram root.
  \end{definition}
  
\begin{definition}[\lcdaexpanded ]
  \label{def:lcda}
  In a dendrogram $T$, the \lcdaexpanded~of edges $e_i$ and $e_j$ (denoted as
  $\LCDA{e_i}{e_j}$) is the deepest edge in $T$ that is an ancestor of both
  $e_i$ and $e_j$, with each edge considered an ancestor of itself.
\end{definition}

\begin{theorem}
\label{thm:path_lcda}
Let \( e_i \) and \( e_j \) be any two edges in a tree \( T \),  then \(
\LCDA{e_i}{e_j} \) is the heaviest edge in the path \( \text{Path}(e_i,
e_j) \). 
\[
\LCDA{e_i}{e_j} =  \text{Heaviest Edge  in the}\ \text{Path}(e_i, e_j)
\]
In other words, if edges are sorted by weights in descending order in
$T$,  then \( \LCDA{e_i}{e_j} \) has the smallest numerical index among
all edges in path \( \text{Path}(e_i, e_j) \). Formally,
\[
\LCDA{e_i}{e_j} = e_{k}\ \text{where}\ k=\underset{e \in \text{Path}(e_i, e_j)}{\arg\min} \text{Index}(e).
\]
\end{theorem}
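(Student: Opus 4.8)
The plan is to characterize dendrogram ancestry directly in terms of the top-down construction (\Cref{alg:dgram-td}) and then read off the LCDA from that characterization. Recall that the dendrogram of $T$ is built recursively: the heaviest edge $r$ of $T$ is the root, its removal splits $T$ into $T_1$ and $T_2$, and the dendrograms of $T_1$ and $T_2$ are hung below $r$. The structural fact I would first establish is: for edges $e, e' \in E$, \emph{$e'$ is a dendrogram ancestor of $e$ if and only if $e$ is an edge of the recursion-component whose heaviest edge is $e'$}. This follows by a short induction on $|E|$, using that the sub-dendrogram rooted at an edge-node $e'$ is exactly the dendrogram of the component $C$ of which $e'$ is the heaviest edge, whose edge-nodes are precisely the edges of $C$ (and whose leaves are the vertices of $C$). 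Consequently, for a fixed edge $e$ the recursion-components containing $e$ form a nested chain (from $T$ down to the component whose heaviest edge is $e$ itself), and $\text{Ancestors}(e)$ is exactly the set of their heaviest edges.

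Granting this, I would argue as follows. Among all recursion-components that contain both $e_i$ and $e_j$ as edges, let $C^{\ast}$ be the smallest; it is well defined because these components form a nested chain whose largest member is $T$ itself. By the characterization above, the common dendrogram ancestors of $e_i$ and $e_j$ are exactly the heaviest edges of the components containing both, and "deeper in the dendrogram" corresponds to "smaller recursion-component" (a proper subcomponent sits in the sub-dendrogram below the heaviest edge of the larger one). Hence $\LCDA{e_i}{e_j}$ is the heaviest edge $e_{C^{\ast}}$ of $C^{\ast}$.

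It then remains to identify $e_{C^{\ast}}$ with the heaviest edge on $\text{Path}(e_i,e_j)$. Since $C^{\ast}$ is a connected subtree of $T$ containing both $e_i$ and $e_j$, it contains the unique path between them, so every edge of $\text{Path}(e_i,e_j)$ lies in $C^{\ast}$ and is therefore no heavier than $e_{C^{\ast}}$. Conversely, $e_{C^{\ast}}$ lies on this path: removing $e_{C^{\ast}}$ from $C^{\ast}$ separates $e_i$ from $e_j$, since otherwise one of the two resulting subcomponents would be a smaller recursion-component containing both, contradicting minimality of $C^{\ast}$; and in a tree the only edges whose removal disconnects two given edges are those on the path joining them. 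Thus $e_{C^{\ast}}$ is on $\text{Path}(e_i,e_j)$ and dominates every edge of it, i.e. it is the heaviest path edge. Translating "heaviest" into "smallest index" via the fixed descending-weight ordering (with ties broken consistently, so the minimizer is unique) yields the stated $\arg\min$ formulation.

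I expect the main obstacle to be the bookkeeping in the first step: making the correspondence between sub-dendrograms and recursion-components fully rigorous, including degenerate cases such as $e_i = e_j$ or $e_i$ (or $e_j$) equal to the root $r$, where the path and the LCDA collapse, and pinning down the meaning of "$\text{Path}$ between two edges" precisely enough that the separation argument is clean. An alternative that avoids the explicit component language is a direct induction on $|E|$ with three cases — $e_i, e_j$ both in $T_1$, both in $T_2$, or split across them — where the split case immediately gives $r$ as both the LCDA and the heaviest path edge; there the only thing to verify is that ancestry within the sub-dendrogram of $T_1$ (resp. $T_2$) agrees with ancestry in the full dendrogram for pairs that stay inside $T_1$ (resp. $T_2$), together with $\text{Path}_T = \text{Path}_{T_1}$ for such pairs since the path avoids $r$.
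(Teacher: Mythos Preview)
Your proposal is correct and follows essentially the same approach as the paper: both arguments single out the minimal recursion-component $C^{\ast}$ (the paper calls it $\mathcal{C}_{ij}$) containing $e_i$ and $e_j$, identify its heaviest edge as the LCDA via the top-down construction, and then use the minimality of $C^{\ast}$ to argue that this edge must lie on $\text{Path}(e_i,e_j)$ and hence is the heaviest path edge. Your write-up is somewhat more careful in first making the ancestry/component correspondence explicit, but the key ideas and the disconnection-by-minimality step are the same.
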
  
\begin{proof}
  Let \( e_h \) denote the heaviest edge in the minimal component \(
  \mathcal{C}_{ij} \) containing both \( e_i \) and \( e_j \). We will prove
  that \( e_h \) is the \(\LCDA{e_i}{e_j}\) and that \( e_h \) is also
  the heaviest edge in the path \( \text{Path}(e_i, e_j) \).
  
  \textit{Part 1: \( e_h \) is \(\LCDA{e_i}{e_j}\).}
  Since \( e_h \) is the heaviest edge in \( \mathcal{C}_{ij} \), by the
  properties of the top-down dendrogram construction, \( e_h \) is the root of
  dendrogram subtree corresponding to  \( \mathcal{C}_{ij} \), hence  must be
  an ancestor to all edges in \( \mathcal{C}_{ij} \), including \( e_i \) and
  \( e_j \). Therefore, \( e_h \) is a common ancestor of \( e_i \) and \( e_j
  \).

  To show that \( e_h \) is the \textit{last} common ancestor, assume for
  contradiction that there exists an edge \( e_l \) that is a common ancestor
  of \( e_i \) and \( e_j \) and is lower in the tree than \( e_h \). The
  existence of \( e_l \) implies the top-down process produced a component
  smaller than \( \mathcal{C}_{ij} \) that contains both $e_i$ and $e_j$, thus 
  contradicting the minimality of \( \mathcal{C}_{ij} \). Therefore, \( e_h \)
  is the \(\LCDA{e_i}{e_j}\).
  
  \textit{Part 2: \( e_h \) is the heaviest edge in \( \text{Path}(e_i, e_j) \).}
\( e_h \) must be in \( \text{Path}(e_i, e_j) \); otherwise, its removal
  would not disconnect \( e_i \) from \( e_j \), contradicting the minimality
  of \( \mathcal{C}_{ij} \). Since \( e_h \) is the heaviest edge of \(
  \mathcal{C}_{ij} \) and \( e_h \in \text{Path}(e_i, e_j) \), it follows that
  \( e_h \) is the heaviest edge in \( \text{Path}(e_i, e_j) \).
  
  Thus, \( e_h \) is both the \(\LCDA{e_i}{e_j}\) and the heaviest edge
  in \( \text{Path}(e_i, e_j) \), which completes the proof.
  \end{proof}

\begin{corollary}
\label{cor:incident_ancestor}
If two edges \( e_i \) and \( e_j \) are incident in a tree \( T \), then one
of them is an ancestor of the other in dendrogram .
\end{corollary}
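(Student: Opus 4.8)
The plan is to obtain this as an immediate consequence of \Cref{thm:path_lcda}. The single observation needed is that if $e_i$ and $e_j$ share a vertex, then they are adjacent in $T$, so $\text{Path}(e_i, e_j)$ is just the two-edge sequence $\{e_i, e_j\}$; in a tree there can be no other edges on this path, since any detour would close a cycle. Therefore the heaviest edge of $\text{Path}(e_i, e_j)$ is whichever of $e_i$, $e_j$ has the larger weight --- equivalently, by the descending-weight sorting convention, the one with the smaller index.

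By \Cref{thm:path_lcda}, this heavier edge equals $\LCDA{e_i}{e_j}$, and by \Cref{def:lcda} the LCDA is an ancestor of both $e_i$ and $e_j$. Hence one of the two incident edges (the heavier one) is an ancestor of both, and in particular an ancestor of the other, which is exactly the claim. The tie-breaking rule fixed when sorting edges of equal weight makes ``the heavier one'' well-defined even in the degenerate case of equal weights, so no special handling is required.

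There is no real obstacle here; the only point that deserves a sentence of care is the assertion that the path between two incident edges consists of exactly those two edges, which follows from the acyclicity of $T$. Everything else is a direct substitution into the theorem, so the corollary reduces to a one-line argument once \Cref{thm:path_lcda} is in hand.
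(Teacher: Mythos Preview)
Your proof is correct and follows essentially the same approach as the paper: both observe that $\text{Path}(e_i,e_j)=\{e_i,e_j\}$ for incident edges, apply \Cref{thm:path_lcda} to conclude $\LCDA{e_i}{e_j}\in\{e_i,e_j\}$, and use the fact that the LCDA is an ancestor of both. Your version is slightly more explicit about tie-breaking and acyclicity, but the argument is the same.
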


\begin{proof}
If two edges \( e_i \) and \( e_j \) are incident, then  \( \text{Path}(e_i,
e_j) =\{e_i, e_j \}\). From~\Cref{thm:path_lcda}, we know that \(
\LCDA{e_i}{e_j} \in \text{Path}(e_i, e_j) \). Therefore, the $\LCDA{e_i}{e_j}\in
\{e_i, e_j \}$. The $\LCDA{e_i}{e_j}$ being an ancestor of both edges makes one
edge an ancestor of the other.
\end{proof}

\subsubsection{ Dendrogram of a contracted tree}
\label{sec:DendroEdgeContraction}
In this section, we seek to establish how edge contraction affects the
dendrogram. Specifically, we will show that under certain edge contractions
preserve the ancestory properties in the dendrogram of the original tree. We
characterize these contractions by providing neccessary and sufficient
conditions for the edge contractions.

Consider a tree \( T = (V, E) \) with its dendrogram defined by a parent
function \( P \). Let \( T_c = (V_c, E_c) \) be a contraction of \( T \) with
its contracted dendrogram defined by the parent function \( P_c \).

\begin{theorem}
  \label{thm:edge_contraction_dendrogram}
  For any two edges \( e_i, e_j \) in \( T_c \), if \( e_i \) is an ancestor of
  \( e_j \) in the dendrogram of \( T \), then \( e_i \) is also an ancestor of
  \( e_j \) in the dendrogram of \( T_c \). However, the converse is not
  necessarily true.
  \end{theorem}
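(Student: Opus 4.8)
The plan is to reduce the whole statement to the path characterization of ancestry given by \Cref{thm:path_lcda}. Recall that $e_i$ is an ancestor of $e_j$ in the dendrogram of a tree exactly when $e_i = \LCDA{e_i}{e_j}$, i.e., when $e_i$ has the smallest index (is the heaviest) among all edges on the unique tree path between $e_i$ and $e_j$. So the theorem becomes the following implication: if $e_i$ has the smallest index among the edges of $\text{Path}_T(e_i,e_j)$, then it also has the smallest index among the edges of $\text{Path}_{T_c}(e_i,e_j)$.

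First I would establish a structural lemma: contraction only removes edges from paths, it never reroutes them. Concretely, $\text{Path}_{T_c}(e_i,e_j) = \text{Path}_T(e_i,e_j)\cap E_c$, with the retained edges occurring in the same relative order. This follows from the definition of edge contraction in \Cref{sec:notation}: contracting tree edges yields a tree again (no cycle can appear, nothing is disconnected), and contracting an edge $(u,v)\in E-E_c$ merely identifies $u$ with $v$ while every retained edge keeps its (possibly renamed) endpoints. Tracing $\text{Path}_T(e_i,e_j)=\{e_{a_1}=e_i,e_{a_2},\ldots,e_{a_n}=e_j\}$ as the contractions are applied, each $e_{a_k}\in E-E_c$ on it collapses and its two incident supervertices merge, whereas the $E_c$-edges on the path survive and remain consecutively adjacent through the merged supervertices; by uniqueness of paths in the tree $T_c$, this surviving subsequence is precisely $\text{Path}_{T_c}(e_i,e_j)$.

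Given the lemma the forward implication is a one-line monotonicity argument: since $e_i,e_j\in E_c$, both lie on $\text{Path}_{T_c}(e_i,e_j)$, which is a subset of $\text{Path}_T(e_i,e_j)$; a minimizer of the index over the larger set is a fortiori a minimizer over the subset, so $e_i=\LCDA{e_i}{e_j}$ in $T_c$ as well, i.e., $e_i$ is an ancestor of $e_j$ in the dendrogram of $T_c$. For the failure of the converse I would exhibit a small instance: a path in $T$ on which $e_i$ and $e_j$ are joined through a single intermediate edge $e$ with $\text{Index}(e)<\text{Index}(e_i)<\text{Index}(e_j)$ and $e\in E-E_c$. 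After contracting $e$ we get $\text{Path}_{T_c}(e_i,e_j)=\{e_i,e_j\}$, so $e_i$ is an ancestor of $e_j$ in $T_c$, yet in $T$ the LCDA is $e\neq e_i$, so $e_i$ is not an ancestor of $e_j$ there; an instance of this phenomenon is already visible among the edges of \Cref{fig:yed}.

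The main obstacle is making the structural lemma watertight: one must rule out that contraction creates a brand-new adjacency between two retained edges that would yield a shorter $T_c$-path than $\text{Path}_T(e_i,e_j)\cap E_c$. This is exactly where the facts that $T_c$ is a tree and that tree paths are unique do the work, so I would state those explicitly; once the lemma is in place, the rest of the argument stays at the level of comparing minima over nested finite sets.
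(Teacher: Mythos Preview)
Your proposal is correct and follows essentially the same approach as the paper: reduce ancestry to the path characterization of \Cref{thm:path_lcda}, observe that $\text{Path}_{T_c}(e_i,e_j)$ is a subset of $\text{Path}_T(e_i,e_j)$, and conclude that the index-minimizer over the larger set remains the minimizer over the subset. Your version is more careful than the paper's (which simply asserts the path-subset fact without justification), and you additionally supply an explicit counterexample for the converse, which the paper states but does not demonstrate.
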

  
  \begin{proof}
    The path between $e_i$ and $e_j$ in the tree \( T_c \) is either
a subset of or equal to their path in $T$ due to edge contraction.
Therefore, if $e_i$ is the heaviest edge in the path between $e_i$ and
$e_j$ in $T$, then it is also the heaviest edge in the path between $e_i$
and $e_j$ in $T_c$. This implies that the LCDA of
$e_i$ and $e_j$ in $T_c$ is $e_i$, and $e_i$ is an ancestor of $e_j$
in the dendrogram of $T_c$.
  
  
  
  
  \end{proof}
\subsubsection{Dendrogram Lineage Preserving Tree Contraction}
  The converse of~\Cref{thm:edge_contraction_dendrogram} does not hold for all
  edge contractions. This means that if \( e_i \) is an ancestor of \( e_j \) in
  the dendrogram of an arbitrary edge contraction \( T_c \), it is not necessary
  that \( e_i \) is also an ancestor of \( e_j \) in the dendrogram of \( T \).
  Therefore, an arbitrary edge contraction cannot accurately reconstruct the
  dendrogram of \( T \) from the contracted dendrogram. This limitation leads to
  the development of a stricter edge contraction referred to as \emph{Dendrogram
  Lineage Preserving Tree Contraction}, which allows for the reconstruction of
  \( T \)'s dendrogram from \( T_c \).

  \begin{definition}[Dendrogram Lineage Preserving Tree Contraction]
  \label{def:lineage_preserving_contraction}
  A dendrogram lineage preserving tree contraction refers to an edge contraction
  \( T_c = (V_c, E_c) \) of a tree \( T = (V, E)\) where, for any two edges \(
  e_i, e_j \) in \( E_c \), \( e_i \) is an ancestor of \( e_j \) in the
  dendrogram of \( T_c \) \textbf{if and only if} \( e_i \) is an ancestor of \( e_j \)
  in the dendrogram of \( T \).
  \end{definition}
  
  \begin{theorem}[Dendrogram Lineage Preservation in Edge Contraction]
  \label{thm:lineage_preserving_contraction}
  An edge contraction \( T_c = (V_c, E_c) \) of a tree \( T \) preserves
  dendrogram lineage if and only if for every pair of edges \( e_i, e_j \) in \(
  T_c \), the Lowest Common Dendrogram Ancestor in \( T \), denoted as \(
  \lcda[T]{e_i}{e_j} \), is also in \( E_c \).
    \end{theorem}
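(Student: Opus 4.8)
The plan is to reduce both directions to Theorem~\ref{thm:path_lcda}, which identifies $\lcda[T]{e_i}{e_j}$ with the heaviest (equivalently, smallest-index) edge on $\mathrm{Path}(e_i,e_j)$, by way of a single structural observation about how paths behave under contraction. The observation I would record first is: for any $e_i,e_j\in E_c$, the path joining them in $T_c$ is exactly the set of surviving edges on their path in $T$, i.e. $\mathrm{Path}_{T_c}(e_i,e_j)=\mathrm{Path}_{T}(e_i,e_j)\cap E_c$, in the same linear order. The inclusion ``$\subseteq$'' is already used in the proof of Theorem~\ref{thm:edge_contraction_dendrogram} (contraction can only shorten a path and never introduces new edges). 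For ``$\supseteq$'', note that deleting a surviving edge $e$ from $\mathrm{Path}_{T}(e_i,e_j)$ disconnects $e_i$ from $e_j$ in $T$, and since contracting the remaining edges cannot reconnect a tree, $e$ must also lie on the unique path $\mathrm{Path}_{T_c}(e_i,e_j)$.

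For the ``if'' direction, assume every $\lcda[T]{e_i}{e_j}$ with $e_i,e_j\in E_c$ lies in $E_c$. Fix such a pair and set $e_k=\lcda[T]{e_i}{e_j}$, which by Theorem~\ref{thm:path_lcda} is the smallest-index edge of $\mathrm{Path}_{T}(e_i,e_j)$. Since $e_k\in E_c$, the path observation places $e_k$ on $\mathrm{Path}_{T_c}(e_i,e_j)$; being smallest-index over the whole path, it is smallest-index over this subset, so Theorem~\ref{thm:path_lcda} applied inside $T_c$ gives $\lcda[T_c]{e_i}{e_j}=e_k=\lcda[T]{e_i}{e_j}$. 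Combined with the elementary fact that an edge $e_i$ is an ancestor of $e_j$ exactly when their LCDA equals $e_i$ (immediate from Definition~\ref{def:lcda}), we conclude that $e_i$ is an ancestor of $e_j$ in $T$ iff $e_k=e_i$ iff $e_i$ is an ancestor of $e_j$ in $T_c$, which is precisely the biconditional of Definition~\ref{def:lineage_preserving_contraction}.

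For the ``only if'' direction I would argue by contradiction: suppose the contraction preserves dendrogram lineage, yet some pair $e_i,e_j\in E_c$ has $e_k:=\lcda[T]{e_i}{e_j}\notin E_c$. Let $e_m:=\lcda[T_c]{e_i}{e_j}$, the smallest-index edge of $\mathrm{Path}_{T_c}(e_i,e_j)=\mathrm{Path}_{T}(e_i,e_j)\cap E_c$. Because $e_m\in E_c$ while $e_k\notin E_c$ we have $e_m\ne e_k$, and since $e_k$ is the unique smallest-index edge of $\mathrm{Path}_{T}(e_i,e_j)$ and $e_m$ also lies on that path, $\mathrm{Index}(e_k)<\mathrm{Index}(e_m)$. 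On the other hand $e_m$ is an ancestor of both $e_i$ and $e_j$ in $T_c$, so by lineage preservation it is an ancestor of both in $T$, hence a common ancestor, hence an ancestor of the \emph{lowest} common ancestor $e_k$ in $T$; since a strict ancestor has strictly smaller index (immediate from the top-down construction) and $e_m\ne e_k$, this forces $\mathrm{Index}(e_m)<\mathrm{Index}(e_k)$, contradicting the previous inequality. Hence $e_k\in E_c$ for every such pair.

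The main obstacle I anticipate is stating and justifying the path-under-contraction lemma cleanly, since every subsequent step is a short deduction from Theorem~\ref{thm:path_lcda}; a secondary point requiring care is the consistent use of the total order ``heaviest $=$ smallest index'' (relying on the paper's tie-breaking convention), so that ``ancestor of'' always translates to a strict index inequality and the contradiction in the last paragraph is airtight.
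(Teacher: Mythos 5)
Your proposal is correct, and on the sufficiency direction it follows essentially the same route as the paper: identify $\lcda[T]{e_i}{e_j}$ with the smallest-index edge on $\mathrm{Path}_T(e_i,e_j)$ via Theorem~\ref{thm:path_lcda}, observe that this edge survives into $\mathrm{Path}_{T_c}(e_i,e_j)$ and remains smallest-index there, and conclude that the two LCDAs coincide so that ancestry transfers in both directions. Two things distinguish your write-up. First, you isolate and actually justify the path-under-contraction identity $\mathrm{Path}_{T_c}(e_i,e_j)=\mathrm{Path}_T(e_i,e_j)\cap E_c$ (in particular the $\supseteq$ inclusion, via the observation that deleting a surviving edge still disconnects $e_i$ from $e_j$ after contraction); the paper asserts the corresponding step without argument, so this is a genuine tightening. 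Second, and more substantially, you prove the ``only if'' direction, which the paper's proof omits entirely: its argument only establishes that LCDA-closure implies lineage preservation (and even then only the half not already covered by Theorem~\ref{thm:edge_contraction_dendrogram}). Your necessity argument --- comparing the index of $\lcda[T_c]{e_i}{e_j}$ against that of the missing $\lcda[T]{e_i}{e_j}$ and deriving opposite strict inequalities from ``on the path in $T$'' versus ``strict common ancestor in $T$'' --- is sound, given the convention that a strict dendrogram ancestor always has strictly smaller index and that the common ancestors of two nodes form a chain above their LCDA. So your proposal is not only correct but proves the full biconditional as stated, whereas the paper's proof covers only the direction it actually uses in Section~\ref{sec:finalcorrect}.
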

    
\begin{proof}
Let \( T = (V, E) \) be a tree and \( T_c = (V_c, E_c) \) be a contracted
version of \( T \) that satisfies the requirement of
\Cref{thm:lineage_preserving_contraction}. We aim to prove that if \( e_i \)
is an ancestor of \( e_j \) in \( T_c \), then it must also be an ancestor in
\( T \).

Assume, for the sake of contradiction, that there exists an edge \( e_i \) in \(
T_c \) that is an ancestor of \( e_j \) in the contracted dendrogram of \( T_c
\), but not in the dendrogram of \( T \). Since \( e_i \) can be an ancestor of
\( e_j\) if and only if $\lcda[T]{e_i}{e_j} = e_i$, it follows that \(
\lcda[T]{e_i}{e_j} \neq e_i \) and \( \lcda[T_c]{e_i}{e_j} =  e_i
\). Hence, our assumption implies that \( \lcda[T]{e_i}{e_j} \neq
\lcda[T_c]{e_i}{e_j} \).

By the definition of the contracted tree $T_{c}$, \( \lcda[T]{e_i}{e_j} \)
is in \( E_c \). From~\Cref{thm:path_lcda}, \( \lcda[T]{e_i}{e_j} \) is the
heaviest edge in the path \( \text{Path}_T(e_i, e_j) \) between \( e_i \) and \(
e_j \) in \( T \). Therefore, in \( T_c \), \( \lcda[T]{e_i}{e_j} \) must
also be the heaviest edge in the path \( \text{Path}_{T_c}(e_i, e_j) \), which
implies \( \lcda[T_c]{e_i}{e_j} = \lcda[T]{e_i}{e_j} \).

This contradicts implication of our assumption that \( \lcda[T]{e_i}{e_j}
\neq \lcda[T_c]{e_i}{e_j} \). Therefore, if \( e_i \) is an ancestor of
\( e_j \) in \( T_c \), it must also be an ancestor in \( T \), which concludes
the proof.
 \end{proof}

 \subsubsection{Correctness of the Algorithm}
  \label{sec:finalcorrect}
  The minimum spanning tree (MST) has three types of edges: leaf edges (zero
children), chain edges (one child), and $\alpha$ edges (two children). In our
contraction strategy, we only contract non-$\alpha$ edges, leaf, and chain
edges. The contracted tree contains all the $\alpha$ edges. With their two
children, only the $\alpha$ edges are capable of providing divergent paths
necessary for the lowest common dendrogram ancestor (LCDA) of any two edges not
including itself. Leaf edges, with no children, and chain edges, with only a
single lineage, lack this branching structure. Therefore, the contracted
$\alpha$-\mst contains all the edges that can be LCDA of two edges, both
distinct from itself. Hence, from the \Cref{thm:lineage_preserving_contraction}
the lineage is preserved in the contracted dendrogram.

While lineage preserving contraction is necessary for reconstructing the final
dendrogram,  it's not sufficient to ensure the correctness
of~\Cref{algm:pandora}. For example, the $\beta$-MST also preserves lineage, but
the $\beta$ dendrogram can't be used to reconstruct the original tree. 

Our \Cref{algm:pandora} inserts non-$\alpha$ edges into a contracted
$\alpha$-dendrogram. To ensure the final tree is accurate, it's important to
demonstrate that the edge contraction $E_\alpha \cup \{e_r, e_s\}$ (where $e_r$
and $e_s$ are non-$\alpha$ edges) also preserves lineage. This guarantees that
any two non-$\alpha$ edges that aren't ancestors of each other in the correct
dendrogram do not become ancestors in the constructed dendrogram. Since LCDA of
$e_r$ and $e_s$ can be either an $\alpha$ edge, $e_r$ or $e_s$ , all of them
contained in $E_\alpha \cup \{e_r, e_s\}$. Therefore, the tree contraction
containing $E_\alpha \cup \{e_r, e_s\}$ satisfies the conditions
of~\Cref{thm:lineage_preserving_contraction} for any pair of non-$\alpha$ edges
$e_r$ and $e_s$.

\section{Asymptotic work analysis}
\label{sec:complexity}
This section demonstrates that \NewAlgName~, is work-optimal. To establish this,
we first ascertain the lower bound of $\Omega(n \log n)$ for any dendrogram computing algorithm.
Subsequently, we show that \NewAlgName~achieve this lower bound.
\ifjournal
\textbf{\large Asymptotic lower bounds for dendrogram computation:}
\else
\subsection{Asymptotic lower bounds for dendrogram computation}
\fi
When computing a dendrogram, the number of operations required by any algorithm
must be at least the same as sorting a set of $n$ floats. However, while sorting
is required for bottom-up and \NewAlgName~methods, it is not mandatory. The
top-down approach(\Cref{alg:dgram-td}), for instance, can be carried out without
sorting. Therefore, we show that any algorithm for dendrogram computation must
have a lower bound that is no less than sorting.
\begin{theorem}
\label{thm:lowerbound} 
Any algorithm that computes a dendrogram from a tree with $n$ edges would
require at least $\Omega(n \log n )$ operations in the worst case.
\end{theorem}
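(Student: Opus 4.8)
The plan is to establish the lower bound by a reduction from sorting: I will show that any dendrogram-construction algorithm, when fed a suitably chosen tree, implicitly sorts $n$ arbitrary real numbers, so the $\Omega(n\log n)$ comparison-sorting lower bound transfers. Concretely, given $n$ distinct weights $w_1,\dots,w_n$ to be sorted, I would construct a \emph{path graph} (a ``caterpillar'' or simple path) $T$ on $n+1$ vertices whose $i$-th edge has weight $w_i$. Since $T$ is a tree with $n$ edges, it is a valid input to any dendrogram algorithm.

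The key observation is that for a path, the dendrogram is a single chain: by Theorem~\ref{thm:path_lcda}, the root of the dendrogram is the heaviest edge, and recursively each subpath contributes its heaviest remaining edge, so the parent/child structure of the dendrogram is exactly the linear order of the edges sorted by weight. Thus from the output dendrogram $P$ one can read off, in $O(n)$ additional work, the permutation that sorts $w_1,\dots,w_n$ (e.g., by walking from the root down the unique chain, or by reading each node's parent pointer). Hence any dendrogram algorithm running in time $T(n)$ yields a sorting algorithm running in $T(n)+O(n)$, and since comparison-based sorting of $n$ elements requires $\Omega(n\log n)$ comparisons in the worst case, we get $T(n) = \Omega(n\log n)$.

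For the steps in order: (1) fix the reduction gadget — the weighted path $T$ built from the input keys; (2) invoke Theorem~\ref{thm:path_lcda} (or the top-down characterization directly) to argue the dendrogram of a path is the chain of edges in sorted weight order; (3) describe the $O(n)$ post-processing that extracts the sorted order from the dendrogram's parent function; (4) conclude via the classical $\Omega(n\log n)$ sorting lower bound. One should also note the model: the bound is for comparison-based (or algebraic decision tree) algorithms, matching the fact that weights are real numbers compared only via $\le$; if one wants to be careful, state that the argument shows at least as many comparisons as sorting.

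The main obstacle — really the only subtle point — is making precise that the dendrogram of a path genuinely encodes the \emph{full} sorted order and not merely partial information, i.e., that no two distinct input orderings can produce the same dendrogram on the path gadget. This follows because distinct weight assignments to the path's edges give distinct edge-rank orders, and the dendrogram-of-a-path is an order isomorphism between (edges sorted by weight) and (nodes of the chain from root to leaf); but it is worth spelling out so the reduction is airtight. A secondary, purely expository concern is ensuring the gadget is consistent with the paper's convention that ties are broken to keep the dendrogram unique — using distinct weights sidesteps this entirely.
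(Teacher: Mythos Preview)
Your reduction-from-sorting strategy is exactly right and matches the paper's approach, but your choice of gadget is wrong: the dendrogram of a weighted \emph{path} is \emph{not} a single chain in general, so it does not encode the full sorted order. Concretely, take the path $v_1\!-\!e_1\!-\!v_2\!-\!e_2\!-\!v_3\!-\!e_3\!-\!v_4$ with weights $w(e_1)=1$, $w(e_2)=3$, $w(e_3)=2$. The heaviest edge $e_2$ is the dendrogram root, and removing it splits the path into two nontrivial subpaths, so $e_2$ has \emph{two} edge-node children, $e_1$ and $e_3$. The resulting dendrogram tells you $e_2$ is heaviest but says nothing about the relative order of $e_1$ and $e_3$. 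Your appeal to Theorem~\ref{thm:path_lcda} only gives you the root; the recursion branches, which is precisely what you need to rule out. The ``only subtle point'' you flagged is in fact fatal for the path gadget.

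The fix, and what the paper does, is to use a \emph{star}: one center connected to $n$ leaves, edge $i$ carrying weight $w_i$. Removing the heaviest edge from a star always isolates a single leaf vertex, so the root of the dendrogram has one vertex-child and one edge-child (the root of the remaining, smaller star). Inductively the dendrogram is a single chain of edges in weight order, and your steps (3) and (4) then go through verbatim.
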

\textbf{Proof:} If we have a list of $n$ floats that we want to sort, we can use
a minimum spanning tree $T$ with $n+1$ vertices and $n$ edges in a star
topology. This means that there is one central vertex connected to all other
vertices. By constructing this tree where the edge weight corresponds to the
floats in the list, we can compute its dendrogram, which consists of a single
chain of edges sorted by weight. We can then extract the sorted list of floats
from the dendrogram. It's worth noting that computing the dendrogram is the only
operation that requires more than $O(n)$ operations. If an algorithm could
compute this dendrogram with fewer than $\Omega(n \log n)$ operations, it would
mean that we could sort $n$ floats with lower than $\Omega(n \log n)$
complexity. However, this is a contradiction.

\ifjournal
\textbf{\large Asymptotic Work Complexity for \NewAlgName:}
\else
\subsection{Asymptotic Work Complexity for \NewAlgName}
\fi
Using \NewAlgName, we can construct the dendrogram of an \mst with $n$ edges in
$O(n \log n)$ operations. We establish bounds on the number of edges in each
level of contracted \mst, including leaf edges ($n_l$), chain edges ($n_c$), and
$\alpha$ edges ($n_{\alpha}$) as described in \Cref{sec:notation}. We show that
the number of $\alpha$-edges $n_{\alpha}$ is at most $(n-1)/2$ and the number of
contraction levels is at most $\ceil{\log_2(n+1)}$. Using these relationships,
we demonstrate that the total cost of edge contractions is of $O(n)$ and
dendrogram expansion costs $O(n \log n)$. Sorting is required twice, before and
after the tree contraction, and each operation costs $O(n \log n).$ Our
algorithm's overall cost is $O(n \log n)$, making it work-optimal.
\paragraph{Proof of $n_\alpha \leq (n-1)/2$:}
Firstly, it is worth noting that the number of leaf edges is always one more
than the number of $\alpha$-edges: $n_\alpha = n_l -1$. Formally, in a connected
rooted binary tree where each node has zero, one, or two children, the count of
leaf nodes (nodes with no children) is always one more than the count of nodes
having two children. This result can be proven easily by induction, and we omit
the proof here.

Secondly, the dendrogram has no edges besides $\alpha$-edges, leaf edges, and
chain edges. Therefore, $n_{\alpha} + n_{l} + n_{c} = n$. Additionally, since
$n_\alpha = n_l -1$, it follows that $n_{c} = n - 2n_\alpha -1$. Since the
number of chain edges is always non-negative, e.g. $n_c \geq 0$, we can conclude
that: 
\[ n_{\alpha} \leq (n-1)/2. \]

\paragraph{The number of contraction levels:}
Performing a single tree contraction will reduce the edges in the contracted
$\alpha$-\mst to no more than $(n-1)/2$. Similarly, after the second
contraction, the contracted $\beta$-\mst will have at most $(n_{\alpha}-1)/2$
edges, resulting in $n_{\beta} \leq (n-3)/4$ edges. 

With each subsequent contraction, the number of remaining edges decreases by
half. After $k$ levels of contraction, the remaining edges will be at most
$(n-2^{k}+1)/2^{k}$. Therefore, there will be no remaining edges after $k$
reaches $\ceil{\log_2(n+1)}$.

\paragraph{Cost of Edge Contraction:}
Performing tree contraction on a tree with $n$ edges is equivalent to a prefix
sum (or scan) operation on an array with $2n$ entries \cite{miller1985ptreecon}.
Hence, a single tree contraction requires fewer than $c_1n$ operations. for some
constant $c_1$. After the $k$-th contraction, the contracted tree will have at
most $(n-2^k+1)/2^k < n/2^k$ edges. Therefore, the total cost of contraction is
less than $\sum c_1 n/2^k  < 2c_1n$ operations.

\paragraph{Cost of Dendrogram Expansion:}
Mapping any edge node to its dendrogram chain costs $O(\log n)$. We check each
edge in the dendrogram prolongation to see if it is in a leaf chain in the
$k$-th level contraction, an $O(1)$ operation. Since there are $O(\log n)$
levels of contraction, mapping all $n$ edges to their respective chains leads to
an overall cost of $O(n \log n)$ for the dendrogram expansion.

\paragraph{Total Cost of Dendrogram Construction:}
We perform two sorting operations - before tree contraction and after mapping
edges to chains - in addition to tree contraction and dendrogram prolongation.
Both operations have a cost of $O(n \log n)$. As a result, our algorithm is
work-optimal with an overall cost of $O(n \log n)$.

\section{Performance portable implementation}\label{sec:implementation}

We implemented our algorithms as part of the ArborX~\cite{arborx2020} library.
We used Kokkos~\cite{kokkos2022}, a performance-portable programming model
which allows code to run on various CPU and GPU platforms without additional
modifications. Kokkos offers parallel execution patterns, such as parallel
loops, reductions, and scans, while abstracting hardware complexities. Kokkos
introduces "execution space" and "memory space" abstractions for execution and
memory resources. Kokkos does not perform hidden data copies, so users must
ensure data accessibility between memory and execution spaces. The Kokkos
library provides C++ abstractions and supports hardware through backends,
including CPUs and Nvidia, AMD and Intel GPUs.

The kernels in the described algorithms are all written as a sequence of
parallel loops, reductions and prefix sums, available in most parallel
libraries, e.g., Thrust. Kokkos uses internal heuristics to map a kernel to the
underlying hardware architecture.

The highest complexity kernel in the algorithm is the tree contraction. We use
a disjoint set method based on the union-find approach to contract multiple
vertices into a single super vertex. For our union-find implementation, we use
a synchronization-free GPU algorithm proposed in~\cite{jaiganesh2018}, which
uses pointer jumping.

Initially, an implementation based on Euler tour was considered. Euler tour
is more suitable for tree contractions as it requires a single prefix sum
kernel. This approach was used in~\cite{wang2021}. However, this would require
initial conversion of an MST, given as a set of edges, to the Euler tour. We
found the conversion algorithm~\cite{polak2021} to be expensive in practice,
taking time comparable to the full dendrogram construction.

\section{Experimental Results}\label{sec:results}

\subsection{Available open-source dendrogram construction implementations}
\label{sec:implementation_challenges}

\begin{table}
\caption{Available open-source dendrogram construction implementations}
\label{tab:implementations}
\begin{tabularx}{\columnwidth}{lX}
    \toprule
    Implementation & Description \\
    \midrule
    \texttt{scikit-learn} (Python)~\cite{scikit-learn} & Sequential implementation \\
    \texttt{hdbscan} (Python)~\cite{mcinnes2017} & Sequential implementation \\
    \texttt{hdbscan} (R)~\cite{hdbscan_r} & Sequential implementation \\
    Wang \emph{et al.}~\cite{wang2021} & Multi-threaded implementation in shared memory \\
    \texttt{rapidsai}~\cite{rapidsai} & Parallel MST implementation using GPUs, sequential dendrogram construction \\
    \bottomrule
\end{tabularx}
\end{table}

Most of the papers describing parallelization of \hdbscan or single-linkage
clustering focus on optimizing distance computations, such as efficiently
computing distance matrices or constructing MSTs, as we demonstrate
in~\Cref{tab:implementations}. We are aware of only one work~\cite{wang2021}
that explores hierarchy construction parallelization.
However, the online source code for~\cite{wang2021} only offers a sequential
implementation for the critical part of the algorithm, with parallelization
applied only to operations like sorting edges and finalization.
This approach is not unreasonable, as the sequential cost of sorting should be similar to or higher
than dendrogram construction.

The cuML library's single-linkage clustering uses RAPIDS.ai/Raft~\cite{rapidsai},
which only implements the parallel MST construction but not the dendrogram. We
have not yet found efficient parallel algorithms for dendrogram construction
beyond the work presented in \cite{wang2021}.


\begin{table}
\centering
\ra{1.2}
\footnotesize
\caption{Datasets used in experiments}
\label{tab:datasettable}
\begin{threeparttable}
  \begin{tabularx}{\linewidth}{ 
    >{\raggedright\arraybackslash}p{0.2\linewidth} 
    >{\raggedleft\arraybackslash}p{0.05\linewidth}  
>{\raggedleft\arraybackslash}p{0.05\linewidth}  
>{\raggedleft\arraybackslash}p{0.05\linewidth} 
>{\raggedleft\arraybackslash}p{0.05\linewidth}
    >{\raggedright\arraybackslash}p{0.3\linewidth} 
  }
  \toprule
{Name} & {Dim} & {$n_{pts}$} & {Imb} & {Ref.} & {Desc.}\\
\midrule
  Ngsimlocation3 & 2 & \(6\)M   & 1e3     & \cite{ngsim2018data}            & GPS loc \\
  RoadNetwork3   & 2 & \(400\)K & 150       & \cite{3droad}           & Road network \\
  Pamap2         & 4 & \(3.8\)M & 6e3     & \cite{reiss2012}        & Activity monitoring \\
  Farm           & 5 & \(3.6\)M & 5e4    & \cite{farm2024data}             & VZ-features\cite{varma2003} \\
  Household      & 7 & \(2.0\)M & 1e3     & \cite{UCI}              & Household power \\
  Hacc37M        & 3 & \(37\)M  & 1e5    & \cite{hacc}             & Cosmology \\
  Hacc497M       & 3 & \(497\)M & 6e5   & \cite{hacc}             & Cosmology \\
  VisualVar10M2D & 2 & 10M        & 3e3     & \cite{gan2017}          & GAN \\
  VisualVar10M3D & 3 & 10M        & 1e4    & \cite{gan2017}          & GAN \\
  VisualSim10M5D & 5 & 10M        & 43        & \cite{gan2017}          & GAN \\
  Normal100M2D   & 2 & 100M       & 1e5   & -                        & Random (normal) \\
  Normal300M2D   & 2 & 300M       & 4e5   & -                        & Random (normal) \\
  Normal100M3D   & 3 & 100M       & 4e5  & -                        & Random (normal) \\
  Uniform100M2D  & 2 & 100M       & 1e5     & -                        & Random (uniform) \\
  Uniform100M3D  & 3 & 100M       & 4e5   & -                        & Random (uniform) \\
\bottomrule
\end{tabularx}
\end{threeparttable}
\end{table}

\begin{figure*}
  \centering
    \includegraphics[width=\textwidth]{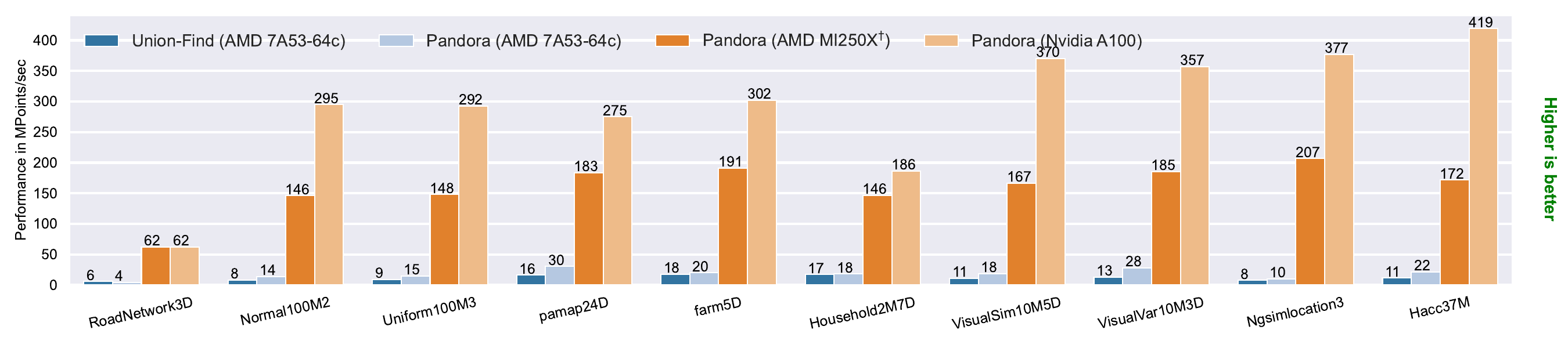}
   \caption{\label{fig:parallel-perf} Performance comparison of
   the multithreaded (using \amdcpu) and parallel (using \nvidiagpu and \amdgpu
   (single GCD)) implementations.}
\end{figure*}

\begin{figure}
  \includegraphics[width=\columnwidth]{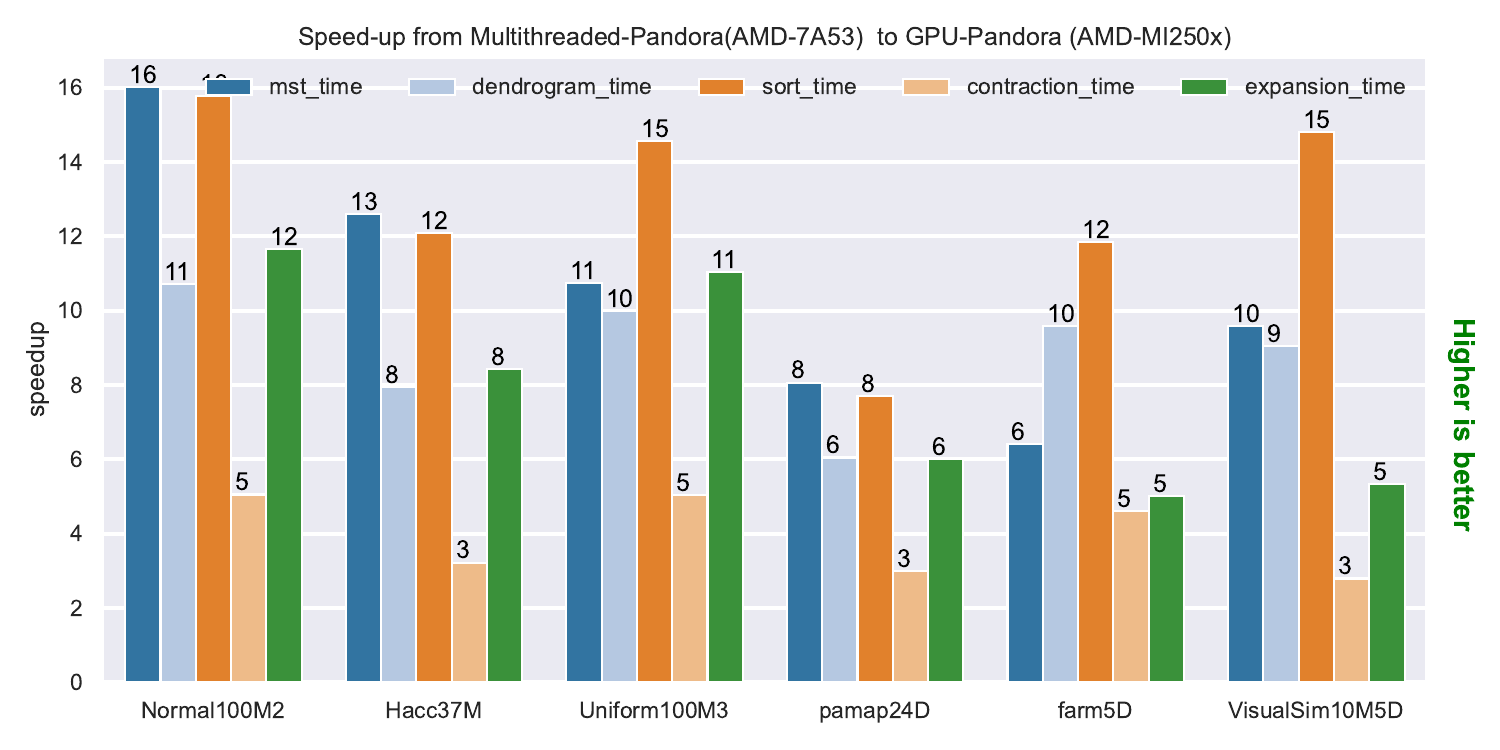}
  \caption{\label{fig:speedup} Speedup of \amdgpu over \amdcpu for different phases of HDBSCAN* 
  with\NewAlgName. }
\end{figure}
\begin{figure}
  \includegraphics[width=\columnwidth]{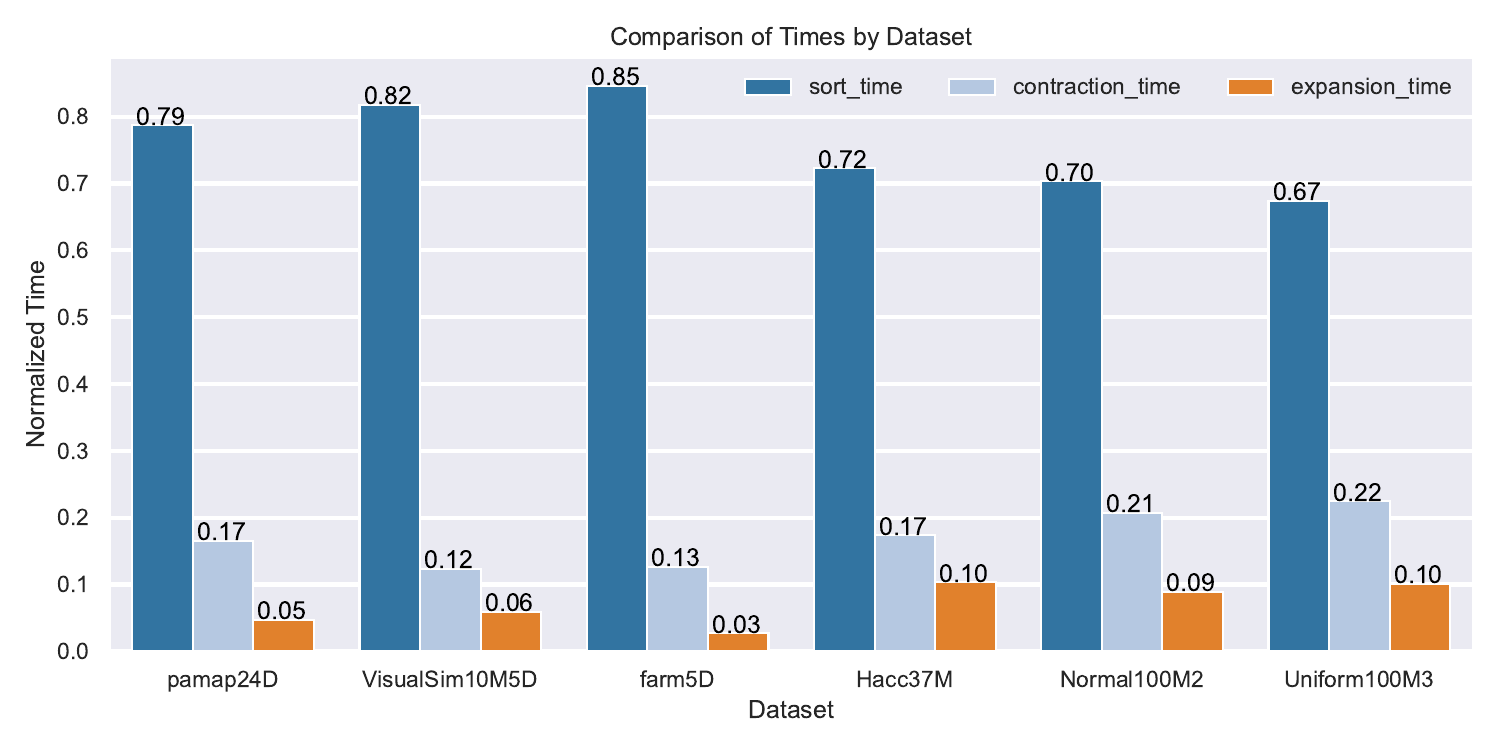}
  \caption{\label{fig:breakdown} Breakdown of time spent  \pandora
  on \amdcpu. }
\end{figure}

\begin{figure}
  \subfloat[Hacc497M\label{fig:samplingHacc}]{\includegraphics[width=0.49\columnwidth]{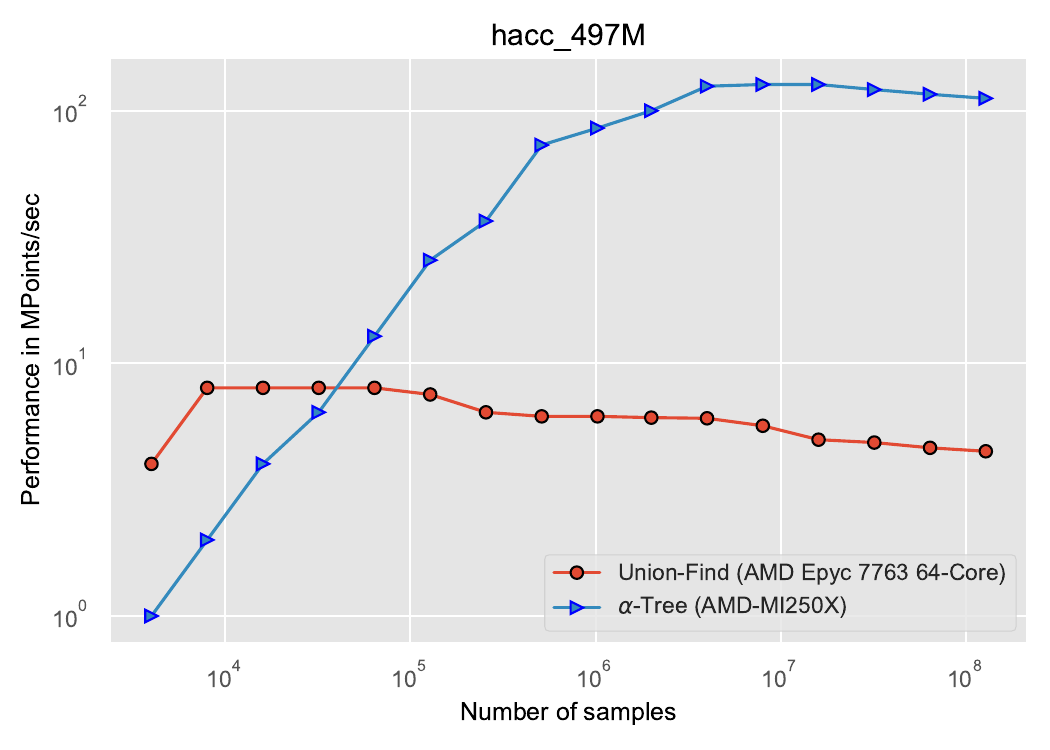}}
  \hfill
  \subfloat[Normal300M2\label{fig:samplingNormal}]{\includegraphics[width=0.49\columnwidth]{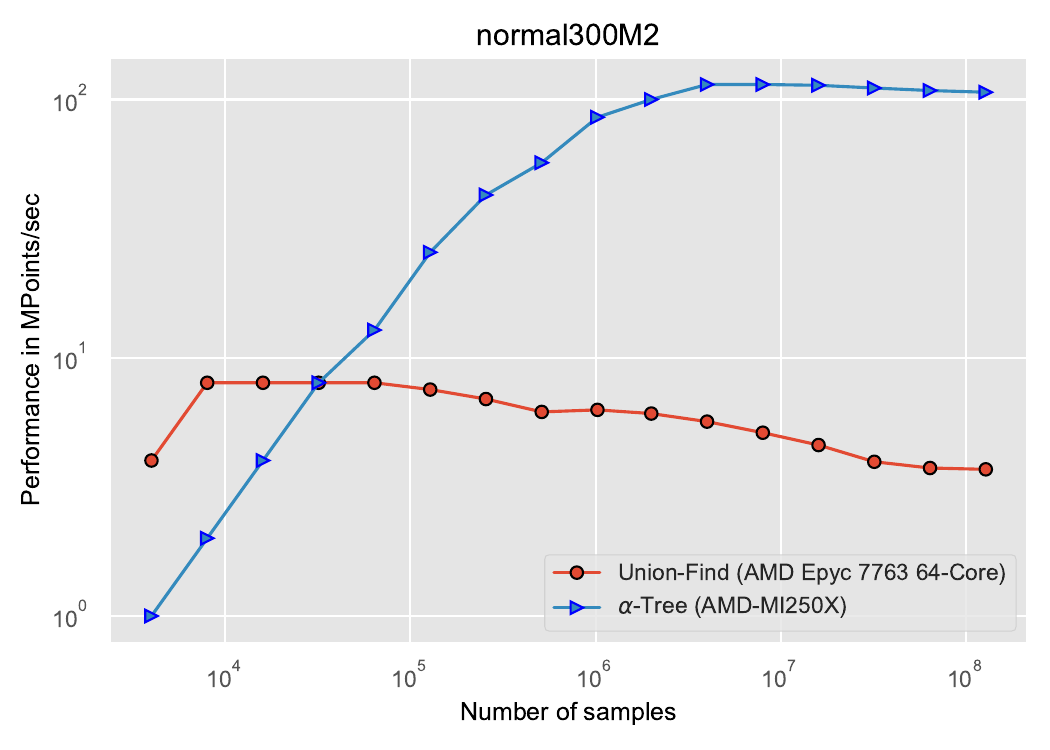}}
  \caption{\label{fig:sampling} Effect of the dataset size on the parallel performance using \amdgpu.}
  \end{figure}

  \begin{figure*}
    \subfloat[\label{fig:hacc37M} Hacc37M]{\includegraphics[width=0.49\textwidth]{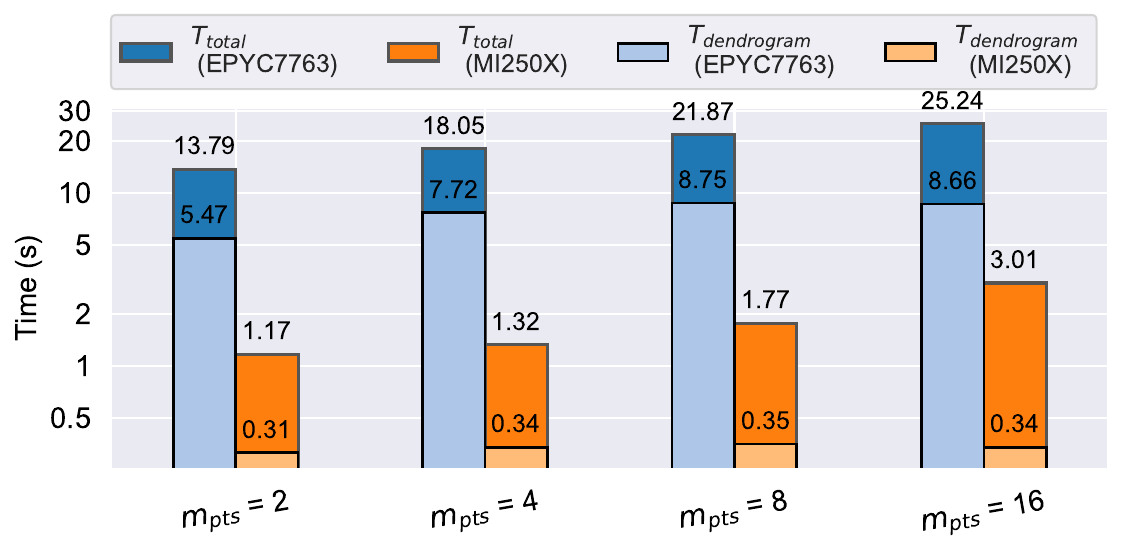}}
    \subfloat[\label{fig:Uniform100M3D} Uniform100M3D ]{\includegraphics[width=0.49\textwidth]{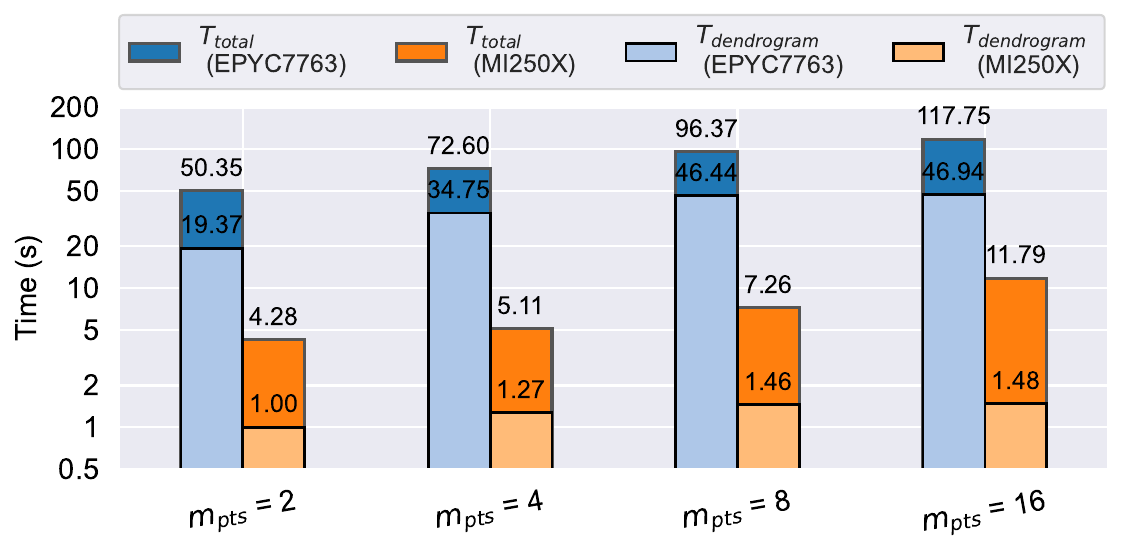}}
    \caption{\label{fig:hdbscan}Comparison of the time to compute first two steps of
      the \hdbscan algorithm using \wangemst on \amdcpu (blue) and ArborX+our dendrogram algorithm
      using \amdgpu (single GCD) (orange).}
    \end{figure*}

In our implementation, we used
\unless\ifblind
ArborX~\cite{arborx2020} (version 1.4-devel) to compute \emst, and
\fi
Kokkos~\cite{kokkos2022} (version 3.7) for implementing our parallel dendrogram algorithm.
\unless\ifblind
The implemented algorithm is available in the main ArborX
repository\footnote{\url{https://github.com/arborx/ArborX}}.
\fi

\subsection{Datasets}
For our experiments, we used a combination of artificial and real-world
datasets listed in~\cref{tab:datasettable} to comprehensively evaluate our
algorithm and meet our study goals. The GPS locations and HACC datasets
replicate real-world conditions, the datasets generated with~\cite{gan2017}
allow for better comparison with other works, and synthetic datasets help us
understand our algorithm's behaviour in different scenarios. We focused on 2D
and 3D datasets, where dendrogram construction was the bottleneck; on higher
dimensions, \mst construction was the primary issue.

\cref{tab:datasettable} also includes the information about the height of the
constructed dendrograms. Specifically, we show the ratio of a dendrogram height
for each dataset to that of a perfectly balanced binary tree. As we can see, it
clearly indicates the skewedness of the dendrograms, highlighting the challenge
of finding enough parallelization for an efficient algorithm.

\subsection{Testing environment}
The numerical studies presented in the paper were performed using \amdcpu (64 cores),
\nvidiagpu and a single GCD (Graphics Compute Die) of
\amdgpu\footnote{Currently, HIP (Heterogeneous-computing Interface for
Portability) -- the programming interface provided by AMD -- only allows the use
of each GCD as an independent GPU.}. The chips are based on TSMC's N7+, N7 and
N6 technology, respectively, and can be considered to belong to the same generation.
We used Clang 14.0.0 compiler for \amdcpu, NVCC 11.5 for \nvidiagpu, and ROCm
5.4 for \amdgpu.

\textbf{Competing Implementation}
We evaluate \pandora's performance on multithreaded \amdcpu, \nvidiagpu and
\amdgpu (single GCD). Our baseline is \unionfind from
\url{https://github.com/wangyiqiu/hdbscan}~\cite{wang2021}. Note that \unionfind
involves a parallel multithreaded sort and a sequential Union-find step phase.
We used this implementation for verification of our results. To our knowledge,
this is the fastest implementation of dendrogram computation available, and we
use it as a baseline for comparison. 

\noindent\textbf{Performance Metrics:} We measure our performance in
MPoints/sec, which represents the number of points (in millions) processed per
second. This metric is computed as 1e-6 * \#points in dataset/ Time to compute dendrogram.

\subsection{Performance evaluation of dendrogram construction}
We evaluate  the performance of the for the baseline \unionfind and \pandora on
various architecture across different datasets. The results are shown in
\Cref{fig:parallel-perf}.

\subsubsection{Multithreaded performance}
\label{sec:multithreaded-perf}
We found that \pandora outperforms \unionfind in multithreaded scenarios, with
speed-ups ranging from 0.66 to 2.2 times. The \dataset{RoadNetwork3D} dataset is
an exception, showing slower performance and having the smallest size with a
lower dendrogram imbalance. Smaller 2D datasets exhibit limited multi-threading
capabilities, with \pandora having a slight advantage. However, 3D and 4D
datasets show more significant speed-ups, reaching up to 2.2 times faster. In
higher-dimensional data sets, both algorithms have higher overall throughput,
but \pandora still has a slight edge. Thus, even with twice the sequential work,
\pandora remains faster than \unionfind in a multithreaded setting.

\subsubsection{GPU  performance}
\label{sec:parallel-perf}

Our study in \Cref{fig:parallel-perf} showcases \pandora's performance on both
\nvidiagpu and \amdgpu. Our findings indicate that \pandora operates
6-20$\times$ faster on \amdgpu (single GCD) than on \amdcpu, whereas \nvidiagpu outperforms
the fastest multithreaded variant by 10-37$\times$. The \dataset{RoadNetwork3D}
exhibits the lowest performance, caused by the small dataset size, not allowing
to reach GPU saturation. We generally observe higher speed-ups for
lower-dimensional datasets. This may only sometimes be the case as
higher-dimensional datasets often have lower-dimensional substructures,
resulting in similar behavior to lower-dimensional data. We also observe that
the GPU variant performs well for all ranges of dendrogram skewness. For
example, despite an imbalance of 43 in \dataset{VisualSim10M5D}, we still
achieve a considerable speed-up over multithreaded \pandora. We conclude that
\pandora has sufficient parallelism to utilize modern GPUs. , and
it works well with highly skewed and not-so-skewed dendrograms alike. 

Our implementation's performance is portable across multicore and GPU
architectures, with a single source compiled for various backends. We did not
optimize our algorithm specifically for any device architecture, nor did we
investigate the impact of architectural differences between \amdgpu and
\nvidiagpu on performance. However, we should note that we primarily utilized
\nvidiagpu in our software development, which may have led to a performance bias
towards that architecture.

\subsubsection{Scalability of different phases in \pandora} 
\label{sec:scale-phase}
\Cref{fig:speedup} shows the speed-up of \amdgpu over \amdcpu for different
phases of \hdbscan with \pandora. 
We divide the computation is divided into the following phases:
\begin{itemize}
  \item \emst construction
  \item Multilevel Contraction of  MST 
  \item reconstruction of the dendrogram from the contracted MST 
  \item Sorting (includes both initial and final sort, as well as other operations)
\end{itemize}
We observe that sorting is the most scalable phase, with a speed-up of
10-20$\times$ over \amdcpu. Where multilevel edge contraction is the least
scalable, with a speed-up of 3-5$\times$ over \amdcpu. 
In \Cref{fig:breakdown}, we show the breakdown of the time spent in different
phases of \pandora on \amdcpu. We observe that the time spent in sorting is
dominant, followed by multilevel edge contraction. The time spent in
reconstruction of the dendrogram is negligible. Therefore, despite poor scaling of 
multilevel edge contraction, the overall performance of \pandora is still
very good.





\subsubsection{Scaling problem sizes }
\label{sec:scaling}



One way to determine the effectiveness of a parallel algorithm is to understand
the smallest problem size at which it achieves peak performance. To this end, we
studied the performance of the \pandora algorithm with respect to the size of
the dataset. We randomly sampled a large dataset to maintain a given
distribution, as the algorithm could potentially be sensitive to the
distribution of the data. The results of sampling three datasets
(\dataset{Hacc497M}, \dataset{Normal300M2}, and \dataset{Uniform300M3}) on
\amdgpu are shown in \Cref{fig:sampling}. For reference, we also show the
performance of the \unionfind implementation on \amdcpu. For the \unionfind
implementation, performance immediately reaches its peak and slowly decreases
afterward. On the other hand, the performance of the new algorithm increases
with the number of samples until it reaches saturation and stays constant
afterward. At around 30,000 samples, the performance of \pandora-GPU exceeds
that of \unionfind. We can observe that GPU saturation occurs around the $10^6$
mark, which is typical for GPU algorithms.

\subsection{HDBSCAN$^{*}$ Performance} \label{sec:hdbscan}

\hdbscan (or hierarchical DBSCAN)~\cite{campello2015} is a density-based
hierarchical clustering algorithm that assigns points to the same cluster if
their local density is higher relative to a larger neighborhood. The algorithm
has the ability to find clusters of arbitrary shapes and varying densities and
has one main input parameter, $minPts$, which is the number of points used to
estimate local density. It uses a special metric called mutual reachability
distance, which is a variant of Euclidean distance complemented by the local
density.

\hdbscan consists of several steps: computing the MST using the mutual
reachability distance, computing a dendrogram for hierarchical clustering, and
(optional) computing non-overlapping (flat) clusters to convert hierarchical to
flat clusters and filter out noise points.
This paper focuses on improving the performance of the second step, which may
be the most expensive part of the algorithm for a low dimensional data, given
recent improvements in parallel MST computation on GPUs~\cite{prokopenko2022}.

\noindent\textbf{\hdbscan Parameters selection:} 
%
The only parameter relevant to our evaluation is $m_{pts}$, which is the number
of points to compute the core-distance. Different values of $m_{pts}$ produce
different dendrograms and affect the time spent on MST and dendrogram
construction. We use the default $m_{pts}=2$ in all our experiments except for
\Cref{fig:hdbscan} where we evaluate the performance of \hdbscan for different
values of $m_{pts}$. The performance gains of \pandora over \unionfind
increases with $m_{pts}$, therefore, for a fair comparison we use $m_{pts}=2$ in
other experiments.

We evaluated the impact of \pandora algorithm on HDBSCAN computation on \amdcpu
and \amdgpu. We used a  multithreaded \hdbscan implementation
\wangemst\cite{wang2021} as baseline. Additionally, we used GPU MST computation
from ArborX~\cite{prokopenko2022}  along with \pandora~ for computing dendrogram
for GPU implementation of \hdbscan. We based our results on two datasets:
\dataset{Hacc37M} and \dataset{Uniform100M3D}, primarily focusing on the
\(m_{pts}\) value, the sole parameter affecting these phases.

We show the results in \Cref{fig:hdbscan}. Overall combination of ArborX with
\pandora on \amdgpu is  8-12$\times$ faster than multithreaded \wangemst. And
dendrogram computation with \pandora on \amdgpu is 17-33$\times$ faster than
\unionfind in \wangemst. We also observe that \pandora uses less than a third of
the total \hdbscan time, while \unionfind can account for more than half.

With increasing $m_{pts}$ times for dendrogram computation
for both datasets. Going from \(m_{pts}=2\) to \(m_{pts}=16\), dendrogram
computation time in \pandora increased by 1.1-1.5$\times$. In contrast, for
\unionfind, the this time increased by a factor of 1.6-2.4$\times$

We also observe that the speed-up of dendrogram computation increases with
\(m_{pts}\). However, as \(m_{pts}\) rises, the EMST computation
demands more resources. Thus, the benefits of quicker dendrogram computation may
be counterbalanced by the more demanding \emst computation. 




\section{Conclusion}\label{sec:conclusion}

We presented a new parallel algorithm to construct a dendrogram using GPUs. We
showed described the details of the implementation. We gave the experimental
results confirming its performance portability and efficiency compared to the
existing approaches using a variety of datasets and hardware architectures. To
our knowledge, this is the first algorithm to construct a dendrogram on GPUs.
Our algorithm combined with \emst on GPUs allows us to produce \hdbscan
clustering in under a minute for datasets that fit into GPU memory.

In future work, we plan to improve the efficiency of individual kernels and
explore different algorithms for \mst compression—we believe there is still a
lot of untapped potential in this area. Another interesting possibility would be
to extend our approach so that it can process datasets too large for device
memory.

\unless\ifblind
\section*{Acknowledgment}
This research was supported by the Exascale Computing Project (17-SC-20-SC), a
collaborative effort of the U.S. Department of Energy Office of Science and
the National Nuclear Security Administration.

This research used resources of the Oak Ridge Leadership Computing Facility at
the Oak Ridge National Laboratory, which is supported by the Office of Science
of the U.S. Department of Energy under Contract No. DE-AC05-00OR22725.
\fi

\ifjournal
  \bibliographystyle{ACM-Reference-Format}
\else
  \bibliographystyle{apalike}
\fi
\bibliography{main}

\end{document}